\newtheorem{theorem}{Theorem}
\newtheorem{lemma}{Lemma}
\newcommand{\argmax}{\mathop{\rm argmax}\limits}
\newcommand{\argmin}{\mathop{\rm argmin}\limits}
\title{Efficiently Bounding Optimal Solutions \\ after Small Data Modification \\ in Large-Scale Empirical Risk Minimization}
\author{
Hiroyuki Hanada \\
Nagoya Institute of Technology \\
Nagoya, Aichi, Japan \\
{\tt hanada.hiroyuki@nitech.ac.jp} \\
\and
Atsushi Shibagaki \\
Nagoya Institute of Technology \\
Nagoya, Aichi, Japan \\
{\tt shibagaki.a.mllab.nit@gmail.com} \\
\and
Jun Sakuma \\
University of Tsukuba \\
Tsukuba, Ibaraki, Japan \\
{\tt jun@cs.tsukuba.ac.jp} \\
\and
Ichiro Takeuchi \\
Nagoya Institute of Technology \\
Nagoya, Aichi, Japan \\
{\tt takeuchi.ichiro@nitech.ac.jp} \\
}
\date{June 1, 2016}
\begin{document}
\maketitle

\begin{abstract} 
We study large-scale classification problems in changing environments where a small part of the dataset is modified, and the effect of the data modification must be quickly incorporated into the classifier. 
When the entire dataset is large, even if the amount of the data modification is fairly small, the computational cost of re-training the classifier would be prohibitively large.  
In this paper, we propose a novel method for efficiently incorporating such a data modification effect into the classifier without actually re-training it. 
The proposed method provides bounds on the unknown optimal classifier with the cost only proportional to the size of the data modification.
We demonstrate through numerical experiments that the proposed method provides sufficiently tight bounds with negligible computational costs, especially when a small part of the dataset is modified in a large-scale classification problem.

\end{abstract}

\section{Introduction} \label{sec:intro}

In this paper we study a problem of training a classifier such as the support vector machine (SVM) with a large-scale dataset.
When a trained classifier is used in a changing environment where a part of the dataset is modified, we need to accordingly update the classifier for incorporating the effect of the data modification.
However, when the entire dataset is large, even if the amount of the data modification is fairly small, the computational cost of re-training the classifier would be prohibitively large. 
In such a situation of modifying only a tiny part of the dataset, we would expect that the optimal classifier trained with the modified dataset (called \emph{new} classifier) would not be so different from the original classifier which was trained with the original dataset (called \emph{old} classifier).
Thus we might feel that spending a great amount of computational cost for re-training the classifier is not well-worthy effort, and feel like to use the old classifier without caring the effect of the data modification.

In this paper, we propose a novel efficient method for properly incorporating the effect of the data modification without actually re-training the classifier. 
The proposed method
provides bounds on the new classifier with the cost only proportional to the size of the data modification. 
This computational advantage is beneficial especially when the size of the data modification is much smaller than the size of the entire dataset. 
The bounds obtained by the proposed method can be used for various tasks in classification problems. 
The bounds can be also used for computing the upper bound of the difference of the old and the new classifiers, which would be helpful for deciding when we should actually re-train the classifier. 

Concretely, consider a linear binary classification problem with $n$ instances and $d$ features, and denote the data matrix as $X \in \RR^{n \times d}$. 
\figurename~\ref{fig:illustration} illustrates the three scenarios considered in this paper.
Denoting the set of modified elements in $X$ as
$\cM$, we are particularly interested in the situation where only a small part of $X$ is modified, i.e., $|\cM|$ is much smaller than $nd$.
Let $\hat{\bm {w}}^* \in \RR^d$ and $\tilde{\bm {w}}^* \in \RR^d$ be the linear model parameters of the old and new classifiers, respectively. 
The proposed method provides a lower bound $L(\tilde{{w}}_j)$ and an upper bound $U(\tilde{{w}}_j)$ such that
\begin{align}
 \label{eq:beta-bounds}
 L(\tilde{{w}}^*_j) \le \tilde{{w}}^*_j \le U(\tilde{{w}}^*_j) \text{ for each } j = 1, \ldots, d 
\end{align}
with the cost only proportional to the number of the modified elements $|\cM|$. 

\begin{figure}[t]
 \begin{center}
  \includegraphics[width=1.0\linewidth]{./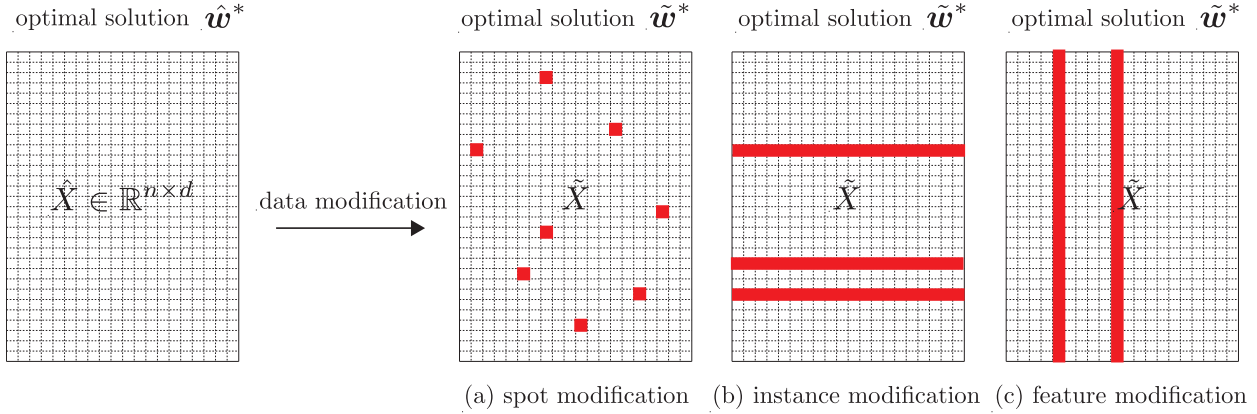}
 \end{center}
 \caption{Schematic illustration of the three scenarios considered in this paper. When a small part of the data matrix $X$ is modified, we need to quickly incorporate the effect of the data modification into the classifier. Our proposed method can compute a lower and an upper bound of the new optimal solution parameter $\tilde{w}_j^*$ with the cost only proportional to the number of modified elements.}
 \label{fig:illustration}
\end{figure}

Another novel contribution in this paper is to present methods for improving the bounds in \eqref{eq:beta-bounds} by what we call the \emph{partial optimization}.
When the number of features $d$ is large, optimizing all the $d$ coefficients $\{{w}_j\}_{j=1}^d$ would be very costly. 
For example, when $\cM = \{\{(i, j^\prime)\}_{i=1}^n\}$, i.e., the $j^\prime$$^{\rm th}$ column of $X$ is modified (scenario (c) in \figurename~\ref{fig:illustration}),
we would conjecture that
$j^\prime$$^{\rm th}$
parameter would change a lot
while other parameters would not change so much (unless the correlation of the feature with the $j^{\rm th}$ feature is very large). 
%
Thus we propose a partial optimization approach in which
we only optimize a part of the parameters that are expected to change largely.
A nice thing about the partial optimization approach is that the bounds in \eqref{eq:beta-bounds} can be improved with less computational cost than optimizing all the parameters.
In the above example, we would optimize only $j^\prime$$^{\rm th}$ parameter for computing tighter bounds in the form of \eqref{eq:beta-bounds}.
We also consider dual problems of linear classifier training where we have $n$ dual parameters $\{ \alpha_i \}_{i=1}^n$.
In the case where $i^\prime$$^{\rm th}$ row of the data matrix $X$ is modified (scenario (b) in \figurename~\ref{fig:illustration}), we can conduct partial optimization where only $\alpha_{i^\prime}$ is optimized while other dual parameters are fixed.
%
Although partial optimization-like approaches might have been used in practice as a simple heuristic, to the best of our knowledge, there are no previous studies which points out that partial optimization can be useful for theoretically-sound non-heuristic decision making. 

\subsection{Related Works}

In many practical machine learning tasks, we often need to solve multiple related optimization problems.
In such a case, \emph{warm-start} approach is sometimes useful where the optimal solution of a related problem is used as an initial starting point of the optimization problem~\cite{Decoste00,Cauwenberghs2001}.
In the data modification scenarios we consider here, we can use the old parameters $\hat{\bm {w}}^*$ as an initial starting point of the optimization problem for obtaining the new parameters $\tilde{\bm {w}}^*$.
Unfortunately, however, even when we use warm-start, the computational cost of re-training an SVM-like classifier is at least $\cO(nd)$ because we need to go through the entire data matrix $X$ at least once in the optimization algorithm.

In optimization lieterature, so-called \emph{sensitivity analysis} has been studied for the purpose of investigating the effect of \emph{problem parameters} on the optimal solution, where problem parameters refer to the parameters that define the optimization problem. 
In the data modification scenarios we consider here, each element of the data matrix $X$ can be considered as a problem parameter because it defines the optimization problem. 
Therefore, sensitivity analysis can be used for knowing how the classifier would change by the data modification in the vicinity of the old solution $\hat{\bm {w}}$. 
Unfortunately, however, sensitivity analysis cannot guarantee anything about the yet-to-be trained new solution $\tilde{\bm {w}}$.

The idea of bounding the solution of an optimization problem is inspired from the recent series of works on \emph{safe screening}~\cite{ghaoui2012safe,xiang2011learning,wang2013lasso,bonnefoy2014dynamic,liu2014safe,wang2014safe,xiang2014screening,fercoq2015mind,ndiaye2015gap}.
Safe screening is used for identifying non-active features or/and instances in sparse modeling. 
For example, in Lasso problem, activeness of each feature (whether the feature has nonzero coefficient or not) is characterized by an inequality constraint on the dual variables. 
A key idea in safe screening is that a part of non-active features can be identified without actually solving the optimization problem if one can compute a bound of the dual optimal solution. 
In deriving the bounds in the form of \eqref{eq:beta-bounds}, we use some techniques developed in safe screening literature. 

In the context of incremental learning, a closely related approach was recently proposed in \cite{okumura2015quick}, in which the authors considered the case where a small number of instances are incrementally added or/and deleted from $X$ (similar to scenario (b) in \figurename\ref{fig:illustration}), and proposed an algorithm for computing the bounds of the new solution with the cost depending on $\Delta n \times d$, where $\Delta n$ is the number of added and deleted instances.
Although their bound formula is totally different from ours, in the scenario on instance modification, the time complexities for computing their bounds and our bounds are same.
However, the method in \cite{okumura2015quick} cannot be used for other scenarios such as scenario (a) or (c) in \figurename\ref{fig:illustration}. 
Furthermore, as we clarify in \S\ref{sec:main}, our method can be applied to wider class of learning problems than the one considered in \cite{okumura2015quick}. 
The work in \cite{gabel2015monitoring} also studied similar problems in distributed stream learning context.
In this context, the authors in \cite{gabel2015monitoring} studied how to efficiently compute an upper bound of the difference between the old and the current solutions.

To the best of our knowledge, there are no other existing studies that study the above problem setup in this general form.
In the current big data era, it is quite important to be able to quickly incorporate the effect of a data modification in a huge database without going through the entire database.

\section{Problem Setup and Background} \label{sec:setup}

\subsection{Notations}
For any natural number $n$,
we define
$[n] := \{1, \ldots, n\}$.
We write the subdifferential operator as
$\partial$.
For a function $f$,
its domain is denoted as
${\rm dom}f$,
while its conjugate function is written as 
$f^*$. 
The
$L_2$ norm
of a vector
$\bm v \in \RR^m$
is
written as
$\|\bm v\|_2 := (\sum_{k \in [m]} |v_k|^2)^{1/2}$.

\subsection{Convex Regularized Learning Problems}
In this paper,
we study binary classification problems with $n$ training instances and $d$ features. 
The entire dataset is denoted as 
$(X, \bm y)$,
where 
$X \in \RR^{n \times d}$
is the input matrix
and
$\bm y \in \{\pm 1\}^n$
is the label vector. 
The $i^{\rm th}$ row vector,
the $j^{\rm th}$ column vector, 
and
$(i, j)^{\rm th}$ element of 
$X$
are respectively written as 
$\bm x_{i \cdot} \in \RR^d$,
$\bm x_{\cdot j} \in \RR^n$,
and 
$x_{ij} \in \RR$. 
Furthermore, 
we define
$Z := {\rm diag}(\bm y) X \in \RR^{n \times d}$
for notational simplicity,
and
$\bm z_{i \cdot} \in \RR^d$,
$\bm z_{\cdot j} \in \RR^n$,
$z_{ij} \in \RR$
are also defined similarly. 
We consider the following class of convex regularized empirical risk minimization problems:
\begin{align}
 \label{eq:primal}
 \bm {w}^* = \arg \min_{\bm {w} \in \RR^d} \cP(\bm {w}) := \frac{1}{n} \sum_{i \in [n]} \phi(\bm z_{i \cdot}^\top \bm {w}) + \psi(\bm {w}), 
\end{align}
where
$\phi: \RR \to \RR_+$
is a convex loss function 
and 
$\psi: \RR^d \to \RR_+$
is a convex penalty function.
Using Fenchel's duality theorem (see, e.g., Corollary 31.2.1 in \cite{rockafellar1970convex}), 
the dual problem of \eqref{eq:primal} is written as
\begin{align}
 \label{eq:dual}
 \bm \alpha^* := \arg \max_{\bm \alpha \in \RR^n} \cD(\bm \alpha) := - \frac{1}{n} \sum_{i \in [n]} \phi^*(-\alpha_i) - \psi^* \left(\frac{1}{n} Z^\top \bm \alpha \right),
\end{align}
where
$\phi^*$
and 
$\psi^*$
are convex conjugates of 
$\phi$ 
and
$\psi$,
respectively. 

\paragraph{Example (Smoothed-hinge SVM)}
As a working example, we study smoothed hinge SVM which is obtained by setting 
\begin{align}
 \label{eq:smoothed-hinge1}
 \phi(r) := 
\begin{cases}
 0 & ( r > 1 ), \\
 1 - r - \frac{\gamma}{2}  & (r < 1-\gamma), \\
 \frac{1}{2\gamma} (1 - r)^2 & ({\rm otherwise}),
\end{cases}
 ~~~~~~~~~
 \psi(\bm {w}) := \frac{\lambda}{2} \|\bm {w}\|_2^2, 
\end{align}
where $\gamma > 0$ and $\lambda > 0$ are tuning parameters.
Their convex conjugate functions are respectively written as
\begin{align}
 \label{eq:smoothed-hinge2}
 \phi^*(r) =
 \begin{cases}
  r + \frac{\gamma}{2} r^2 & (-1 \le r \le 0), \\
  +\infty & ({\rm otherwise}),
 \end{cases}
 ~~~~~~
 \psi^*(\bm {w}) = \frac{1}{2 \lambda} \|\bm {w}\|_2^2. 
\end{align}
%

\subsection{Small Data Modification}
In this study,
we consider a situation that a small portion of the input matrix $X$ is modified. 
The set of modified elements in $X$ is denoted as 
$\cM \subseteq \{[n] \times [d]\}$
and its size as
$|\cM|$. 
Furthermore,
we define
$\cM_i := \{i \in [n] \mid \exists j \in [d] \text{ s.t. } (i, j) \in \cM\}$
and 
$\cM_j := \{j \in [d] \mid \exists i \in [n] \text{ s.t. } (i, j) \in \cM\}$. 
%
%
%
We call the problems before and after the data modification as \emph{old} problem and \emph{new} problem, respectively.
We use hat $\hat{~}$ notation for the old problem and tilde $\tilde{~}$ notation for the new problem;
e.g., 
$\hat{X}$,
$\hat{\bm x}_{i \cdot}$,
$\hat{\bm x}_{\cdot j}$,
$\hat{x}_{ij}$,
$\hat{Z}$,
$\hat{\bm z}_{i \cdot}$,
$\hat{\bm z}_{\cdot j}$,
$\hat{z}_{ij}$,
etc.
represent the old data before the modification,
while
$\tilde{X}$,
$\tilde{\bm x}_{i \cdot}$,
$\tilde{\bm x}_{\cdot j}$,
$\tilde{x}_{ij}$,
$\tilde{Z}$,
$\tilde{\bm z}_{i \cdot}$,
$\tilde{\bm z}_{\cdot j}$,
$\tilde{z}_{ij}$,
etc.
represent the new data after the modification. 
Furthermore, 
the primal and the dual optimal solutions for the old and the new problems are
respectively denoted as
$(\hat{\bm {w}}^*, \hat{\bm \alpha}^*)$
and 
$(\tilde{\bm {w}}^*, \tilde{\bm \alpha}^*)$. 

Our main contribution in this paper is to present methods for computing bounds of each primal variable
$\tilde{{w}}^*_j, j \in [d]$ 
and
dual variable 
$\tilde{\alpha}^*_i, i \in [n]$ 
with the computational cost depending only on the number of the modified elements $|\cM|$. 
A lower bound and an upper bound of a scalar quantity 
are respectively represented by using a capital letter $L$  and a capital letter $U$ 
such as 
\begin{align*}
 L(\tilde{{w}}^*_j) \le \tilde{{w}}^*_j \le U(\tilde{{w}}^*_j), 
 j \in [d],
 ~~~~~~
 L(\tilde{\alpha}^*_i) \le \tilde{\alpha}^*_i \le U(\tilde{\alpha}^*_i), 
 i \in [n].
\end{align*}

\subsection{Decision Making by New Solution's Bounds}
Using these bounds,
we can perform several decision making tasks on 
the new solution
$(\tilde{\bm {w}}^*, \tilde{\bm \alpha}^*)$. 

First, consider a situation that we want to classify a test instance $\bm x^\prime \in \RR^d$ based on the new classifier.
In this task,
we compute a lower bound and an upper bound of the classification score 
$f(\bm x^\prime) = \bm x^{\prime \top} \tilde{\bm {w}}^*$. 
Using the lower and the upper bounds,
the class label $y^\prime$ of the test instance $\bm x^\prime$ can be simply determined as 
\begin{align}
 \label{eq:test-classification}
 y^\prime = \text{sign}(\bm x^{\prime \top} \tilde{\bm {w}}^*) = 
 \begin{cases}
  -1 & (U(\bm x^{\prime \top} \tilde{\bm {w}}^*) < 0), \\
  +1 & (L(\bm x^{\prime \top} \tilde{\bm {w}}^*) \ge 0), \\  
  \text{unknown} & \text{(otherwise)}. 
 \end{cases}
\end{align}
Equation \eqref{eq:test-classification} suggests that, 
even if the new optimal solution 
$\tilde{\bm {w}}^*$
is not available,
if we have its sufficiently tight bounds, 
the classification task can be completed for some instances\footnote{
Using $L(\tilde{{w}}_j)$ and $U(\tilde{{w}}_j)$ for $j \in [d]$, we can compute a lower bound and an upper bound of $\bm x^{\prime \top} \tilde{\bm {w}}$ as 
\begin{align*}
 L(\bm x^{\prime \top} \tilde{\bm {w}}^*)
 =
 \displaystyle\sum_{j \mid x^\prime_{ij} \ge 0} x^\prime_{ij} L(\tilde{{w}}_j^*)
 + 
 \sum_{j \mid x^\prime_{ij} < 0} x^\prime_{ij} U(\tilde{{w}}_j^*),
 U(\bm x^{\prime \top} \tilde{\bm {w}}^*)
 =
 \sum_{j \mid x^\prime_{ij} \ge 0} x^\prime_{ij} U(\tilde{{w}}_j^*)
 + 
 \sum_{j \mid x^\prime_{ij} < 0} x^\prime_{ij} L(\tilde{{w}}_j^*).
\end{align*}
}.
Similar discussion has been done in \cite{okumura2015quick} and \cite{shibagaki2015regularization}. 

Next, consider a situation that the data matrix $X$ is constantly changing. 
In such a situation, an important decision making task is to determine when we should actually re-train the classifier.
Suppose that we have a tolerance threshold $\theta > 0$ for how much the classifier can be different from the optimal one.
If we quantify the difference by $L_2$ norm $\|\hat{\bm {w}}^* - \tilde{\bm {w}}^*\|_2$,
we can re-train the classifier only when
\begin{align}
 \label{eq:retrain-criterion}
 U(\| \hat{\bm {w}}^* - \tilde{\bm {w}}^* \|_2) \ge \theta. 
\end{align}
Equation \eqref{eq:retrain-criterion} indicates that,
even if the optimal solution 
$\tilde{\bm {w}}^*$
is not available,
we can make sure that the difference of the currently available solution $\hat{\bm {w}}^*$ from the unknown optimal solution $\tilde{\bm {w}}^*$ is no greater than our tolerance threshold\footnote{
Using $L(\tilde{{w}}_j)$ and $U(\tilde{{w}}_j)$ for $j \in [d]$, we can compute an upper bound of $\| \hat{\bm {w}} - \tilde{\bm {w}} \|_2$ as 
\begin{align*}
U(\| \hat{\bm {w}} - \tilde{\bm {w}} \|_2)
=
\sqrt{\displaystyle\sum_{j \in [d]}
\max\{\hat{{w}}^*_j - L(\tilde{{w}}^*_j), U(\tilde{{w}}^*_j) - \hat{{w}}^*_j\}^2}.
\end{align*}
}.
Similar discussion has been done in \cite{okumura2015quick} and \cite{gabel2015monitoring}. 

Finally, let us discuss the connection with safe screening \cite{ghaoui2012safe,xiang2011learning,wang2013lasso,bonnefoy2014dynamic,liu2014safe,wang2014safe,xiang2014screening,fercoq2015mind,ndiaye2015gap}. 
In the example of smoothed hinge SVM (see \eqref{eq:smoothed-hinge1} and \eqref{eq:smoothed-hinge2}),
the optimality condition indicates that 
$1 < \tilde{\bm z}_i^\top \tilde{\bm {w}}^* ~\Rightarrow~ \tilde{\alpha}^*_i = 0$.
This  condition indicates that
$L(\tilde{\bm z}_i^\top \tilde{\bm {w}}^*) > 1 ~\Rightarrow~ \tilde{\alpha}^*_i = 0$,
which means that the $i^{\rm th}$ instance would be non-support vector, and we can exclude the instance before solving the new optimization problem. 
This approach is referred to as \emph{safe sample screening}\cite{ogawa2013safe,wang2013scaling,Zimmert2015}. 
\section{Efficient Bound Computation after Small Data Change} \label{sec:main}

We present our main results in this section.
Our goal is to efficiently compute lower and upper bounds of the new primal-dual optimal solutions 
$\tilde{\bm {w}}^*$
and
$\tilde{\bm \alpha}^*$
by using the old primal-dual optimal solutions 
$(\hat{\bm {w}}^*, \hat{\bm \alpha}^*)$. 
If 
the loss function
$\phi$
or/and
the penalty function
$\psi$
have certain properties, 
we show that lower and upper bounds of 
$\tilde{{w}}^*_j, j \in [d]$
and
$\tilde{\alpha}^*_i, i \in [n]$
can be computed
with time complexity 
$\cO(|\cM|)$
where 
$|\cM|$
is the number of modified elements. 
When the number of the modified elements 
is much smaller
than
the number of the entire elements
$nd$,
it is quite beneficial to be able to compute the bounds of the new optimal solution in such an efficient way.

Before presenting the main theorem,
we summarize the properties of the loss function and the penalty function:

\noindent
{\bf Property A} (for loss function):
The loss function
$\phi$
is $\gamma^{-1}$-smooth,
i.e.,
$\phi$ is differentiable and its gradient $\partial \phi$ satisfies 
 \begin{align*}
	|\partial \phi(a) - \partial \phi(b)| \le \gamma^{-1} |a - b|
	~~~
	\forall a, b \in \RR.
\end{align*}

\noindent
{\bf Property B} (for penalty function):
       The penalty function 
       $\psi$
       is $\lambda$-strongly convex,
       i.e.,
       $\psi$ is sub-differentiable and its sub-gradient $\partial \psi$ satisfies 
       \begin{align*}
	\psi(\bm a) \ge \psi(\bm b) + \partial \psi(\bm b)^\top (\bm a - \bm b) + \frac{\lambda}{2} \|\bm a - \bm b\|_2^2
	~~~
	\forall \bm a, \bm b \in \RR^d.
       \end{align*}

At the end of this section,
we will present examples of the loss functions and the penalty functions that possess these properties. 
The following theorem tells that,
when the loss function and/or the penalty function have these properties, 
lower and upper bounds of the new solutions 
$\tilde{{w}}^*_j, j \in [d]$, 
and 
$\tilde{\alpha}^*_i, i \in [n]$
can be efficiently computed. 
\begin{theorem}
 \label{theo:main}
 Assume that the following quantities
 \begin{align}
  \label{eq:memory-quantities}
  \hat{\bm z}_{i \cdot}^\top \hat{\bm {w}}, 
  ~
  \|\hat{\bm x}_{i \cdot}\|_2,
 ~
 \hat{\bm z}_{\cdot j}^\top \hat{\bm \alpha},
 ~
 \|\hat{\bm x}_{\cdot j}\|_2,
 ~
 \inf_{\bm \alpha \in {\rm dom} D(\bm \alpha)} n^{-1} \hat{\bm z}_{\cdot j}^\top \bm \alpha,
 ~
 \sup_{\bm \alpha \in {\rm dom} D(\bm \alpha)} n^{-1} \hat{\bm z}_{\cdot j}^\top \bm \alpha, 
 \end{align}
 for all $i \in [n]$ and $j \in [d]$, 
 are stored in the memory with the size $\cO(n + d)$,
 and the penalty function $\psi$ is \emph{decomposable}
 in the sense that there exists $d$ convex functions
 $\psi_j: \RR \to \RR$, $j \in [d]$,
 such that
 \begin{align*}
  \psi(\bm {w}) = \sum_{j \in [d]} \psi_j({w}_j)
  \text{ for all }
  \bm {w} \in \RR^d.
 \end{align*}
 Then,
 in each of the following three cases (i), (ii), and (iii), 
 the lower and the upper bounds of 
 $\{\tilde{{w}}^*_j\}_{j \in [d]}$
 and 
 $\{\tilde{\alpha}^*_i\}_{i \in [n]}$
 can be evaluated with time complexity $\cO(|\cM|)$. 

  Case (i) The loss function has {\bf Property A}: 
 \begin{align*}
  \tilde{{w}}^*_j
  \ge 
  L_D(\tilde{{w}}^*_j)
  &:=
  \inf \partial \psi^*_j \left(
  \max\left\{
  \inf_{\bm \alpha \in {\rm dom} \cD(\bm \alpha)} n^{-1} \tilde{\bm z}_{\cdot j}^\top \bm \alpha, n^{-1} \tilde{\bm z}_j^\top \hat{\bm \alpha}^* - \|\tilde{\bm x}_{\cdot j}\|_2 \sqrt{2 \gamma^{-1} \tilde{G}(\hat{\bm {w}}^*, \hat{\bm \alpha}^*)}
  \right\}
  \right),
  \\
  \tilde{{w}}^*_j
  \le 
  U_D(\tilde{{w}}^*_j)
  &:=
  \sup \partial \psi^*_j \left(
  \min\left\{
  \sup_{\bm \alpha \in {\rm dom} \cD(\bm \alpha)} n^{-1} \tilde{\bm z}_{\cdot j}^\top \bm \alpha, n^{-1} \tilde{\bm z}_j^\top \hat{\bm \alpha}^* + \|\tilde{\bm x}_{\cdot j}\|_2 \sqrt{2 \gamma^{-1} \tilde{G}(\hat{\bm {w}}^*, \hat{\bm \alpha}^*)}
  \right\}
  \right), 
  \\
  \tilde{\alpha}_i^*
  \ge 
  L_D(\tilde{\alpha}^*_i)
  &:=
  \min\left\{
  \inf_{\bm \alpha \in {\rm dom} \cD(\bm \alpha)} \alpha_i, 
  \hat{\alpha}_i - \sqrt{2n \gamma^{-1} \tilde{G}(\hat{\bm {w}}^*, \hat{\bm \alpha}^*)}
  \right\},
  \\
  \tilde{\alpha}_i^*
  \le 
  L_D(\tilde{\alpha}^*_i)
  &:=
  \max\left\{
  \sup_{\bm \alpha \in {\rm dom} \cD(\bm \alpha)} \alpha_i, 
  \hat{\alpha}_i + \sqrt{2n \gamma^{-1} \tilde{G}(\hat{\bm {w}}^*, \hat{\bm \alpha}^*)}
  \right\},
 \end{align*}
 where
 \begin{align*}
  \tilde{G}(\hat{\bm {w}}^*, \hat{\bm \alpha}^*)
  :=
  n^{-1}
  \sum_{i \in \cM_i}
  \left(
  \phi(\tilde{\bm z}_{i \cdot}^\top \hat{\bm {w}}^*)
  -
  \phi(\hat{\bm z}_{i \cdot}^\top \hat{\bm {w}}^*)
  \right)
  +
  \sum_{j \in \cM_j}
  \left(
  \psi^*_j(n^{-1}\tilde{\bm z}_{\cdot j}^\top \hat{\bm \alpha}^*)
  - 
  \psi^*_j(n^{-1}\hat{\bm z}_{\cdot j}^\top \hat{\bm \alpha}^*)
  \right).
 \end{align*}
 
 Case (ii) The penalty function has {\bf Property B}:
 \begin{align*}
  \tilde{{w}}^*_j
  \ge
  L_P(\tilde{{w}}^*_j)
  &:=
  \hat{{w}}_j - \sqrt{2 \lambda^{-1} \tilde{G}(\hat{\bm {w}}^*, \hat{\bm \alpha}^*)},
  ~~~
  \tilde{{w}}^*_j
  \le
  U_P(\tilde{{w}}^*_j)
  &:=
  \hat{{w}}_j + \sqrt{2 \lambda^{-1} \tilde{G}(\hat{\bm {w}}^*, \hat{\bm \alpha}^*)},
  \end{align*}
 \vspace*{-10mm}
 \begin{align*}
  \\
  \tilde{\alpha}_i^*
  \ge 
  L_P(\tilde{\alpha}^*_i)
  &:=
  \inf \left(- \partial \phi \left(
  \tilde{\bm z}_{i \cdot}^\top \hat{\bm {w}}^* + \|\tilde{\bm x}_{\cdot i}\|_2 \sqrt{2 \lambda^{-1} \tilde{G}(\hat{\bm {w}}^*, \hat{\bm \alpha}^*)}
  \right)\right),
  \\
  \tilde{\alpha}_i^*
  \le
  U_P(\tilde{\alpha}^*_i)
  &:=
  \sup \left(- \partial \phi \left(
  \tilde{\bm z}_{i \cdot}^\top \hat{\bm {w}}^* - \|\tilde{\bm x}_{i \cdot}\|_2 \sqrt{2 \lambda^{-1} \tilde{G}(\hat{\bm {w}}^*, \hat{\bm \alpha}^*)}
  \right)\right).
 \end{align*}

 Case (iii) The loss function has {\bf Property A} and the penalty function has {\bf Property B}:
 \begin{align*}
  \tilde{{w}}^*_j
  \ge
  L_{PD}(\tilde{{w}}^*_j)
  :=
  \max\left\{
  L_{P}(\tilde{{w}}^*_j),
  L_{D}(\tilde{{w}}^*_j)
  \right\},
  ~~~
  \tilde{{w}}^*_j
  \le
  U_{PD}(\tilde{{w}}^*_j)
  :=
  \min\left\{
  U_{P}(\tilde{{w}}^*_j),
  U_{D}(\tilde{{w}}^*_j)
  \right\},
  \end{align*}
 \begin{align*}
  \tilde{\alpha}^*_i
  \ge
  L_{PD}(\tilde{\alpha}^*_i)
  :=
  \max\left\{
  L_{P}(\tilde{\alpha}^*_i),
  L_{D}(\tilde{\alpha}^*_i)
  \right\},
  ~~~
  \tilde{\alpha}^*_i
  \le
  U_{PD}(\tilde{\alpha}^*_i)
  :=
  \min\left\{
  U_{P}(\tilde{\alpha}^*_i),
  U_{D}(\tilde{\alpha}^*_i)
  \right\}.
 \end{align*}

\end{theorem}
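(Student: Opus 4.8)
The plan is to recognize the quantity $\tilde{G}(\hat{\bm{w}}^*,\hat{\bm{\alpha}}^*)$ as the duality gap of the \emph{new} problem evaluated at the \emph{old} primal--dual pair, and then to convert that gap into coordinate-wise bounds using standard strong-convexity / strong-concavity inequalities together with the primal--dual optimality relations. The inf/sup wrappers in the statement are there precisely to absorb non-differentiability of $\partial\phi$ and $\partial\psi_j^*$, so I would keep the arguments in terms of subdifferentials throughout.

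First I would establish the identity $\tilde{G}(\hat{\bm{w}}^*,\hat{\bm{\alpha}}^*)=\tilde{\cP}(\hat{\bm{w}}^*)-\tilde{\cD}(\hat{\bm{\alpha}}^*)$. Writing out $\tilde{\cP}(\hat{\bm{w}}^*)-\hat{\cP}(\hat{\bm{w}}^*)$ and $\hat{\cD}(\hat{\bm{\alpha}}^*)-\tilde{\cD}(\hat{\bm{\alpha}}^*)$, the penalty $\psi$ cancels in the primal difference and the loss $\phi^*$ cancels in the dual difference; decomposability of $\psi$ (hence of $\psi^*$) turns the remaining dual difference into a sum over columns, and only rows in $\cM_i$ and columns in $\cM_j$ survive because $\tilde{\bm{z}}_{i\cdot}=\hat{\bm{z}}_{i\cdot}$ off $\cM_i$ and $\tilde{\bm{z}}_{\cdot j}=\hat{\bm{z}}_{\cdot j}$ off $\cM_j$. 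Adding the two differences and using that the old gap $\hat{\cP}(\hat{\bm{w}}^*)-\hat{\cD}(\hat{\bm{\alpha}}^*)$ vanishes by optimality of the old solution yields the claimed formula for $\tilde{G}$.

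Next, Case (ii): under Property B the primal $\tilde{\cP}$ is $\lambda$-strongly convex, so $\tfrac{\lambda}{2}\|\hat{\bm{w}}^*-\tilde{\bm{w}}^*\|_2^2\le\tilde{\cP}(\hat{\bm{w}}^*)-\tilde{\cP}(\tilde{\bm{w}}^*)\le\tilde{\cP}(\hat{\bm{w}}^*)-\tilde{\cD}(\hat{\bm{\alpha}}^*)=\tilde{G}$, the last step being weak duality. Thus $\tilde{\bm{w}}^*$ lies in the ball of radius $\sqrt{2\lambda^{-1}\tilde{G}}$ about $\hat{\bm{w}}^*$, and projecting onto coordinate $j$ gives $L_P,U_P$ for $\tilde{w}_j^*$. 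For the dual I would use $\tilde{\alpha}_i^*\in-\partial\phi(\tilde{\bm{z}}_{i\cdot}^\top\tilde{\bm{w}}^*)$, bound the argument by Cauchy--Schwarz, $|\tilde{\bm{z}}_{i\cdot}^\top(\tilde{\bm{w}}^*-\hat{\bm{w}}^*)|\le\|\tilde{\bm{x}}_{i\cdot}\|_2\sqrt{2\lambda^{-1}\tilde{G}}$ using $\|\tilde{\bm{z}}_{i\cdot}\|_2=\|\tilde{\bm{x}}_{i\cdot}\|_2$, and propagate this interval through the non-increasing map $-\partial\phi$. Case (i) is the dual mirror: Property A makes each $\phi^*$ $\gamma$-strongly convex, hence $\tilde{\cD}$ is $(\gamma/n)$-strongly concave, and the same gap argument gives $\|\hat{\bm{\alpha}}^*-\tilde{\bm{\alpha}}^*\|_2\le\sqrt{2n\gamma^{-1}\tilde{G}}$; intersecting this ball with the feasibility box $\tilde{\bm{\alpha}}^*\in\mathrm{dom}\,\cD$ gives $L_D,U_D$ for $\tilde{\alpha}_i^*$, and $\tilde{w}_j^*=\partial\psi_j^*(n^{-1}\tilde{\bm{z}}_{\cdot j}^\top\tilde{\bm{\alpha}}^*)$ transfers the bound on the scalar argument (from the dual ball and the stored $\inf/\sup_{\bm\alpha}n^{-1}\tilde{\bm{z}}_{\cdot j}^\top\bm\alpha$) through the non-decreasing map $\partial\psi_j^*$. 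Case (iii) is then immediate, since $\tilde{w}_j^*$ and $\tilde{\alpha}_i^*$ satisfy both pairs of bounds, so the $\max$ of the lower and $\min$ of the upper bounds is valid.

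The step I expect to require the most care is the $\cO(|\cM|)$ complexity claim, not the inequalities. The point is that $\tilde{G}$, along with every modified inner product $\tilde{\bm{z}}_{i\cdot}^\top\hat{\bm{w}}^*$, $n^{-1}\tilde{\bm{z}}_{\cdot j}^\top\hat{\bm{\alpha}}^*$ and norm $\|\tilde{\bm{x}}_{i\cdot}\|_2,\|\tilde{\bm{x}}_{\cdot j}\|_2$ appearing above, is obtainable from the stored quantities in \eqref{eq:memory-quantities} by incremental corrections such as $\tilde{\bm{z}}_{i\cdot}^\top\hat{\bm{w}}^*=\hat{\bm{z}}_{i\cdot}^\top\hat{\bm{w}}^*+\sum_{j:(i,j)\in\cM}(\tilde{z}_{ij}-\hat{z}_{ij})\hat{w}_j^*$, and analogously for the column sums and norms; summing these updates over $\cM$ costs $\cO(|\cM|)$ and never touches the unmodified part of $X$, after which each bound is a closed-form $\cO(1)$ evaluation. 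Verifying that $\tilde{G}$ and all per-coordinate quantities genuinely localize to the modified rows and columns is exactly what the cancellations in the first step guarantee, so the two parts of the argument dovetail.
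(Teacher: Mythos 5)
Your proposal is correct and follows essentially the same route as the paper's own proof: the same duality-gap identity expressing $\tilde{G}(\hat{\bm w}^*, \hat{\bm \alpha}^*)$ as $\tilde{\cP}(\hat{\bm w}^*)-\tilde{\cD}(\hat{\bm \alpha}^*)$ via cancellation over unmodified rows/columns and optimality of the old solution, the same strong-convexity/strong-concavity ball arguments (which the paper invokes as its Lemmas~\ref{lemma:dual_gap_bound} and~\ref{lemma:primal_gap_bound} and you derive inline from weak duality), the same KKT-plus-monotone-subdifferential transfer of the ball bounds to individual coordinates (your Cauchy--Schwarz step is exactly the paper's Lemma~\ref{lemma:score_bound_by_shpere}), and the same incremental-update argument for the $\cO(|\cM|)$ complexity. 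No gaps.
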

The proof of Theorem~\ref{theo:main} is presented in Appendix~\ref{app:proof}.
In the proof, we use a technique recently introduced in the context of safe screening based on the duality gap of an approximate solution~\cite{fercoq2015mind,ndiaye2015gap}.
Note that the tightness of the bounds in the theorem depends on a quantity 
$\tilde{G}(\hat{\bm {w}}^*, \hat{\bm \alpha}^*)$,
which indicates the duality gap of the new optimization problem evaluated at the old primal-dual solutions
$\hat{\bm {w}}^*$
and 
$\hat{\bm \alpha}^*$. 
For proving the time complexity in the theorem, 
we exploit a simple fact that the duality gap 
$\tilde{G}(\hat{\bm {w}}^*, \hat{\bm \alpha}^*)$
can be computed with time complexity 
$\cO(|\cM|)$\footnote{
Note that the duality gap of the old problem evaluated at 
$\hat{\bm {w}}^*$
and 
$\hat{\bm \alpha}^*$
is zero
because of their optimality. 
If the duality gap of the old problem evaluated at 
$\hat{\bm {w}}^*$
and 
$\hat{\bm \alpha}^*$
is not exactly zero due to numerical issue, 
we can simply add the remaining gap
to 
$\tilde{G}(\hat{\bm {w}}^*, \hat{\bm \alpha}^*)$. 
}.
The assumption that all the quantities in \eqref{eq:memory-quantities} are available in the memory is reasonable because 
they can be computed when the old optimization problem was solved for obtaining $\hat{\bm {w}}^*$ and $\hat{\bm \alpha}^*$. 
The assumption on decomposability is satisfied in many penalty functions including $L_q^q$-norm with $q \ge 1$. 

Many popular loss functions for binary classification problems such as squared hinge loss and logistic regression loss have {\bf Property A}. 
Penalty function that has {\bf Property B} includes $L_2$ penalty and elastic-net penalty, etc. 
Since we can efficiently compute bounds if either {\bf Property A} or {\bf Property B} is satisfied, 
we can, for example, compute bounds for standard vanilla-hinge SVM, $L_1$-penalized logistic regression, and many other popular classification problems. 
Similar bound computation methods can be derived also for a wide class of regularized convex regression problems. 

The bounds in Theorem \ref{theo:main} for smoothed-hinge SVM are presented in Appendix \ref{app:smoothe-hinge-SVM}.

\section{Partial Optimization for Obtaining Tighter Bounds} \label{sec:extensions}

Let us consider how to handle a situation where the bounds in Theorem~\ref{theo:main} are not sufficiently tight enough. 
If we do not mind spending computational cost depending on the entire data matrix size $nd$, the most naive solution would be to just re-train the classifier with the modified new dataset.
However, when $n$ and $d$ are very large, it would be desirable to be able to obtain tighter bounds than those in Theorem~\ref{theo:main} with a little more additional cost, but still not depending on $nd$. 

Our idea is to simply note that 
the tightness of the bounds
in Theorem~\ref{theo:main}
depends on the \emph{closeness} of the old solution
$(\hat{\bm {w}}^*, \hat{\bm \alpha}^*)$
to
the new solutions 
$(\tilde{\bm {w}}^*, \tilde{\bm \alpha}^*)$. 
It suggests that, if we could replace the old solution with another solution that is closer to the new solution, the bounds could be tightened. 
We propose a simple method for obtaining such a closer solution with the computational cost not depending on the entire data matrix size, and show that the method always improve the bounds in Theorem~\ref{theo:main}. 

A primal variable ${w}_j$, $j \in [d]$, would be highly dependent on the $j^{\rm th}$ column of $X$. 
It suggests that,
if many elements in the $j^{\rm th}$ column of $X$ are modified,
the $j^{\rm th}$ primal variable
${w}_j$
would change a lot after the data modification, 
i.e., 
the difference between
$\hat{{w}}^*_j$
and 
$\tilde{{w}}^*_j$
would be large. 
Similarly, 
a dual variable $\alpha_i$, $i \in [n]$, would be highly affected by the $i^{\rm th}$ row of $X$. 
If many elements in the $i^{\rm th}$ row of $X$ are modified,
the difference between
$\hat{\alpha}^*_i$
and 
$\tilde{\alpha}^*_i$
would be large. 

These observations suggest that
we might be able to obtain a solution closer to the new solution with reasonable computational cost
by only optimizing a part of the variables that are expected to be highly affected by the data modification, 
while other variables are fixed. 
If we consider solving the primal optimization problem only w.r.t. a subset of the primal variables ${w}_j, j \in \cJ \subseteq [d]$,
the computational cost depends only on $n |\cJ|$. 
Similarly,
if we consider solving the dual optimization problem only w.r.t. a subset of the dual variables $\alpha_i, i \in \cI \subseteq [n]$,
the computational cost depends only on $d |\cI|$. 
The following theorem tells that 
the bounds in Theorem~\ref{theo:main} can be strictly tightened 
if these partial optimization problems have a strict progress in terms of their objective function values. 
%
\begin{theorem}
\label{theo:partial-optimization}
Assume that the penalty function is decomposable as defined in Theorem~\ref{theo:main}. 
Consider
partially optimizing the primal problem
\eqref{eq:primal}
only w.r.t.
a subset of the primal variables 
$\{{w}_j\}_{j \in \cJ \subseteq [d]}$. 
Let
$\check{\bm {w}}^*$
be a $d$-dimensional vector 
whose elements corresponding to $\cJ$
are the solution of the partial primal problem, 
while the elements corresponding to $[d] \setminus \cJ$ 
are the old primal solutions 
$\{\hat{{w}}^*_j\}_{j \in [d] \setminus \cJ}$. 
Then,
unless 
$\check{\bm {w}}^* = \hat{\bm {w}}^*$, 
all the bounds in Theorem~\ref{theo:main} will be strictly tightened
by replacing
$\hat{\bm {w}}^*$
with
$\check{\bm {w}}^*$. 

Similarly, 
consider partially optimizing the dual problem
\eqref{eq:dual}
only w.r.t.
a subset of the dual variables 
$\{\alpha_i\}_{i \in \cI \subseteq [n]}$. 
Let 
$\check{\bm \alpha}^*$
be an $n$-dimensional vector
whose elements corresponding to $\cI$
are the solution of the partial dual problem, 
while the elements corresponding to $[n] \setminus \cI$ 
are the old dual solutions 
$\{\hat{\alpha}^*_i\}_{i \in [n] \setminus \cI}$. 
Then,
unless
$\check{\bm \alpha}^* = \hat{\bm \alpha}^*$,
all the bounds in Theorem~\ref{theo:main} will be strictly tightened
by replacing
$\hat{\bm \alpha}^*$
with
$\check{\bm \alpha}^*$. 
\end{theorem}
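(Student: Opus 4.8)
The plan is to reduce the entire statement to a single scalar fact: partial optimization strictly decreases the duality gap $\tilde{G}$. First I would record the identity $\tilde{G}(\bm w, \bm \alpha) = \tilde{\cP}(\bm w) - \tilde{\cD}(\bm \alpha)$, where $\tilde{\cP}$ and $\tilde{\cD}$ denote the primal \eqref{eq:primal} and dual \eqref{eq:dual} objectives evaluated on the \emph{modified} data; the closed form of $\tilde{G}(\hat{\bm w}^*, \hat{\bm \alpha}^*)$ in Theorem~\ref{theo:main} is exactly this difference after cancelling the (zero) old-problem gap, which is why only the rows $i \in \cM_i$ and columns $j \in \cM_j$ survive. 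The key observation is that every bound in Theorem~\ref{theo:main} depends on the reference pair $(\hat{\bm w}^*, \hat{\bm \alpha}^*)$ only through (a) a center that is a fixed continuous function of the reference and (b) the nonnegative radius $\sqrt{c\,\tilde{G}}$ with $c \in \{2\gamma^{-1}, 2\lambda^{-1}, 2n\gamma^{-1}\}$. Each radius is strictly increasing in $\tilde{G}$, and the half-width of each bound interval is a non-decreasing function of that radius (through the monotone maps applied to it: the identity, $\partial \psi^*_j$, or $-\partial \phi$, depending on the bound). Consequently it suffices to prove that $\tilde{G}$ strictly decreases.

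Second, I would treat the primal partial optimization. Writing $S := \{\bm w \in \RR^d : {w}_j = \hat{{w}}^*_j \text{ for all } j \in [d] \setminus \cJ\}$, the vector $\check{\bm w}^*$ is by construction a minimizer of $\tilde{\cP}$ over the affine subspace $S$, and the reference $\hat{\bm w}^*$ also lies in $S$. Since replacing $\hat{\bm w}^*$ by $\check{\bm w}^*$ leaves $\tilde{\cD}(\hat{\bm \alpha}^*)$ untouched,
\begin{align*}
 \tilde{G}(\check{\bm w}^*, \hat{\bm \alpha}^*) - \tilde{G}(\hat{\bm w}^*, \hat{\bm \alpha}^*) = \tilde{\cP}(\check{\bm w}^*) - \tilde{\cP}(\hat{\bm w}^*) \le 0,
\end{align*}
the inequality being strict whenever $\check{\bm w}^* \ne \hat{\bm w}^*$: for a strongly convex penalty $\tilde{\cP}$ is strongly convex, so its minimizer over $S$ is unique and $\hat{\bm w}^* \in S$ cannot attain it unless $\hat{\bm w}^* = \check{\bm w}^*$; in general this is precisely the strict-progress hypothesis. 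The dual partial optimization is symmetric: $\check{\bm \alpha}^*$ maximizes $\tilde{\cD}$ over $\{\bm \alpha : \alpha_i = \hat{\alpha}^*_i \text{ for all } i \in [n] \setminus \cI\}$, which contains $\hat{\bm \alpha}^*$, so $\tilde{\cD}(\check{\bm \alpha}^*) \ge \tilde{\cD}(\hat{\bm \alpha}^*)$ with strict inequality when $\check{\bm \alpha}^* \ne \hat{\bm \alpha}^*$; since the primal reference is now unchanged, $\tilde{G}(\hat{\bm w}^*, \check{\bm \alpha}^*) < \tilde{G}(\hat{\bm w}^*, \hat{\bm \alpha}^*)$.

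Finally, combining the two pieces, a strictly smaller $\tilde{G}$ yields a strictly smaller radius $\sqrt{c\,\tilde{G}}$, and feeding this through the monotone structure of the first step strictly narrows every bound interval, which is what ``strictly tightened'' means. When the modified reference component does not enter the center of a given bound (for instance, a $\tilde{{w}}^*_j$ bound whose center depends only on $\hat{\bm \alpha}^*$ under a primal partial optimization), the new interval is in fact nested inside the old one; in the remaining cases the center also shifts, so I would state the conclusion in terms of interval width rather than individual endpoints.

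I expect the main obstacle to be making ``strictly tightened'' precise and uniform across all the bound formulas. Two points need care. First, the center of each bound generally moves when the reference changes, so endpoint-by-endpoint monotonicity can fail and the honest invariant is the interval width. Second, strictness of the narrowing can degrade to non-strictness on flat pieces of the governing subdifferential, where $\partial \phi$ or $\partial \psi^*_j$ is locally constant; the cleanest universal claim therefore couples the strict decrease of $\tilde{G}$ with the strict monotonicity of $\partial \psi^*_j$ and $-\partial \phi$, which is automatic for an $L_2$ penalty and a smooth loss and is otherwise subsumed by the strict-progress assumption already present in the statement.
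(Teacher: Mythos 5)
Your proposal is correct and follows essentially the same route as the paper's proof: both reduce everything to showing that $\tilde{G}$ strictly decreases, by noting that $\hat{\bm w}^*$ (resp.\ $\hat{\bm \alpha}^*$) is feasible for the partial problem and invoking strong convexity of $\tilde{\cP}$ (resp.\ strong concavity of $\tilde{\cD}$) for strictness, and then observing that every bound depends on the reference pair only through the radius $\sqrt{c\,\tilde{G}}$. In fact you are somewhat more careful than the paper, whose proof stops at ``the radii of the spheres shrink'': your caveats about moving centers (so the honest claim is narrower intervals, not improved endpoints) and about flat pieces of $\partial\psi^*_j$ or $\partial\phi$ or the domain-truncation terms (where strict radius decrease need not strictly change the stated bound) are real gaps in the literal ``all bounds strictly tightened'' claim that the paper's own argument never addresses.
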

The proof of Theorem~\ref{theo:partial-optimization} is presented in Appendix~\ref{app:proof}. 

For each of the three scenarios depicted in \figurename~\ref{fig:illustration}, we consider specific partial optimization strategy. 
First, for spot modification scenario, we set $\cJ := \cM_j$ and $\cI := \cM_i$, i.e., we optimized the primal (resp. dual) variables if there are at least one modification at the corresponding columns (resp. rows) of $X$. 
For instance modification scenario, we set $\cJ := \emptyset$ and $\cI := \cM_i$, i.e., we only optimized dual variables corresponding to the modified rows of $X$. 
Similarly for feature modification scenario, we set $\cJ := \cM_j$ and $\cI := \emptyset$, i.e., we only optimized primal variables corresponding to the modified columns of $X$.

\section{Experiment} \label{sec:exp}
In this section, we empirically investigate the tightness of the bounds obtained by the approaches 
in \S\ref{sec:main} and \S\ref{sec:extensions}. 
%

\subsection{Settings}

We evaluated the tightness of the bounds in terms of the performances of the two tasks described in \S\ref{sec:setup}.
The first task is test instance classification,
where the predicted test label $y^\prime$ for a test instance $\bm x^\prime \in \RR^d$ can be determined by the bounds of the classification score $L(\bm x^{\prime \top} \tilde{\bm w}^*)$ and $U(\bm x^{\prime \top} \tilde{\bm w}^*)$ as in \eqref{eq:test-classification}.
To see the performances of this task, we report the rate of the test instances whose predicted class labels are determined.
The second task is to evaluate how much the solution can change by the data modification.
To see the performances of this task, we report the upper bound of the parameter change in terms of the $L_2$ norm.

\begin{table}[t]
\begin{center}
\begin{minipage}[c]{0.45\hsize}
\caption{Data sets used for the experiment}
\label{table:datasets}
\begin{itemize}
\item All are from LIBSVM data repository \cite{chang2011libsvm}.
\item 80\% of instances are used for training while others are for validation.
\item Every instance $\bm{x}$ is normalized to norm $\|\bm{x}\|_2 = 1$.
\end{itemize}
\end{minipage}
\begin{minipage}[c]{0.45\hsize}
\begin{tabular}{crr}
\hline
Name & \multicolumn{1}{c}{$n$} & \multicolumn{1}{c}{$d$} \\
\hline
{\tt kdd-a}      & 8,407,752 & 20,216,830 \\
{\tt url}        & 2,396,130 &  3,231,961 \\
{\tt news20}     &    19,996 &  1,355,191 \\
{\tt real-sim}   &    72,309 &     20,958 \\
{\tt rcv1-train} &    20,242 &     47,236 \\
\hline
\end{tabular}
\end{minipage}
\end{center}
\end{table}

We studied smoothed-hinge SVM in \eqref{eq:smoothed-hinge1} and \eqref{eq:smoothed-hinge2} on five benchmark datasets obtained from libsvm data repository~\cite{chang2011libsvm} (Table \ref{table:datasets}), where 80\% of the instances are used as training set and the rest are used as test set.
We considered three scenarios in \figurename~\ref{fig:illustration}.
In spot modification scenario, we modified randomly chosen $|\cM|$ elements in $X$ for $|\cM| \in \{1, 100, 10000\}$. 
In instance modification scenario, we randomly chose $\Delta n$ rows of $X$ for $\Delta n \in \{1, 10, 100\}$, and modified all non-zero values in them.
In feature modification scenario, we randomly chose $\Delta d$ columns of $X$ for $\Delta d \in \{1, 10, 100\}$, and modified all non-zero values in them.
To modify a value, we replaced it with a random value taken from the uniform distribution between the minimum and the maximum value of the corresponding feature in the data set.
We set $\lambda = \{0.001, 0.01, 0.1, 1\}$. 
For each condition, we run the experiment with 10 different random seeds.

\subsection{Results}

Figures \ref{fig:experiment-all-1} to \ref{fig:experiment-all-4} show the results of the label determination rate,
Figures \ref{fig:experiment-all-5} to \ref{fig:experiment-all-8} show the upper bound of the parameter change,
and Table \ref{table:computation-time-all} show the ratio of the computation time of bound computation in \S\ref{sec:main} to re-training.

The red points in the figures show the performances of the bounds computed by our proposed method in \S\ref{sec:main}. 
For {\tt kdd-a} dataset, more than 99.9\% of the test labels were determined by using the bounds, and the upper bounds of the parameter change were also highly stable. 
We conjecture that, since this dataset is huge ($n > \text{8 million}$ and $d > \text{20 million}$), small amount of data modification did not change the solution much, and our bounds could nicely capture this phenomenon. 
For {\tt url} dataset, the label determination rates were slightly worse especially when 100 rows were modified in instance modification scenario. For smaller datasets, we can confirm that the rates become worse.
We conjecture that this performance deterioration might happen when many influential row vectors were modified and the change of the solution was relatively large.  
Table~\ref{table:computation-time-all} compares the computational costs of our bound computation method with those of re-training (with warm-start). 
From the table, we observe that the cost of our bound computation method is almost negligible.

The blue points in the figures show the performances of the bounds lifted-up by partial optimization approach discussed in \S\ref{sec:extensions}. 
The black lines connecting red and blue points indicate the correspondence in 10 random trials. 
As described in Theorem~\ref{theo:partial-optimization}, the bounds in blue points are always tighter than those in red points 
(higher value in label determination rate and lower values in parameter change indicate tighter bounds). 
For {\tt kdd-a} dataset, the performance of the red and blue points are almost same because the original bounds in red points were already tight enough. 
For other datasets, the partial optimization approach seemed to work well.
Especially when the performances of the original bounds in red points were relatively poor (e.g., $\Delta n = 100$ case in instance modification scenario), the improvements by partial optimization were significant. 

\begin{figure}[p]
\caption{Label determination rate for $\lambda=0.001$}
\label{fig:experiment-all-1}
\begin{tabular}{ccc}
\hline
\multicolumn{3}{c}{Data set: {\tt kdda}}
\\ \hline
\begin{minipage}{0.32\hsize}
\begin{center}
Feature modification

\includegraphics[clip,width=5cm]{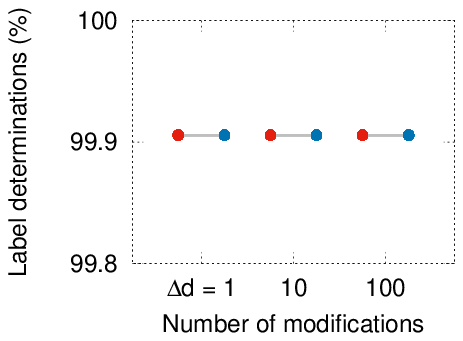}
\end{center}
\end{minipage}
&
\begin{minipage}{0.32\hsize}
\begin{center}
Instance modification

\includegraphics[clip,width=5cm]{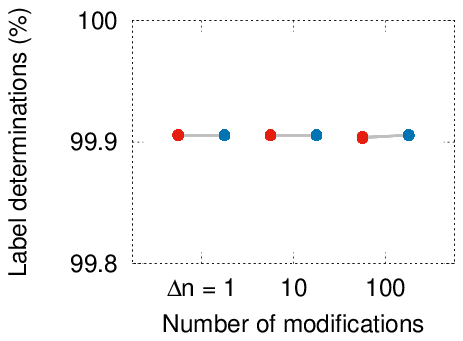}
\end{center}
\end{minipage}
&
\begin{minipage}{0.32\hsize}
\begin{center}
Spot modification

\includegraphics[clip,width=5cm]{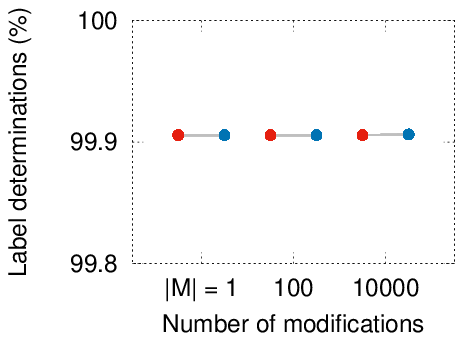}
\end{center}
\end{minipage}
\\ \hline
\multicolumn{3}{c}{Data set: {\tt url}}
\\ \hline
\begin{minipage}{0.32\hsize}
\begin{center}
Feature modification

\includegraphics[clip,width=5cm]{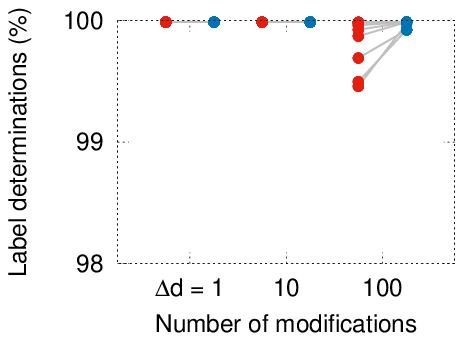}
\end{center}
\end{minipage}
&
\begin{minipage}{0.32\hsize}
\begin{center}
Instance modification

\includegraphics[clip,width=5cm]{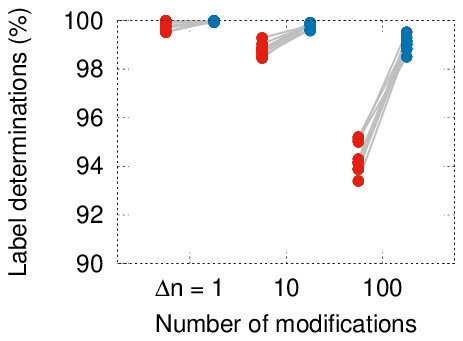}
\end{center}
\end{minipage}
&
\begin{minipage}{0.32\hsize}
\begin{center}
Spot modification

\includegraphics[clip,width=5cm]{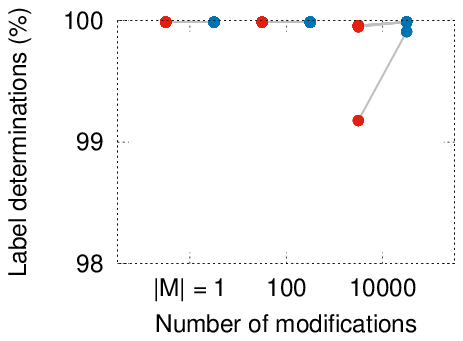}
\end{center}
\end{minipage}
\\ \hline
\multicolumn{3}{c}{Data set: {\tt news20}}
\\ \hline
\begin{minipage}{0.32\hsize}
\begin{center}
Feature modification

\includegraphics[clip,width=5cm]{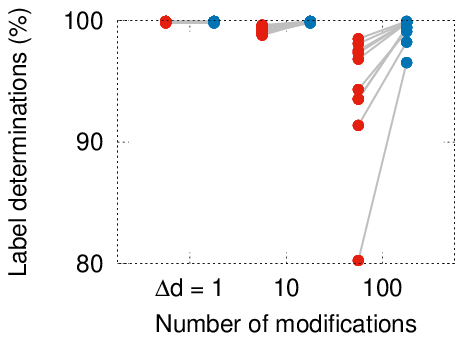}
\end{center}
\end{minipage}
&
\begin{minipage}{0.32\hsize}
\begin{center}
Instance modification

\includegraphics[clip,width=5cm]{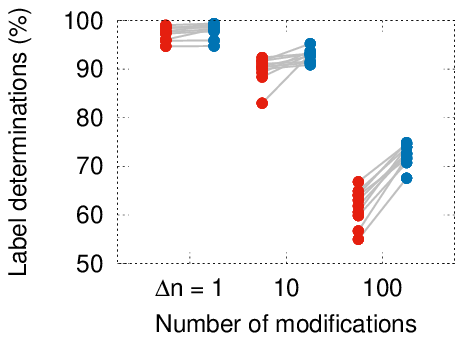}
\end{center}
\end{minipage}
&
\begin{minipage}{0.32\hsize}
\begin{center}
Spot modification

\includegraphics[clip,width=5cm]{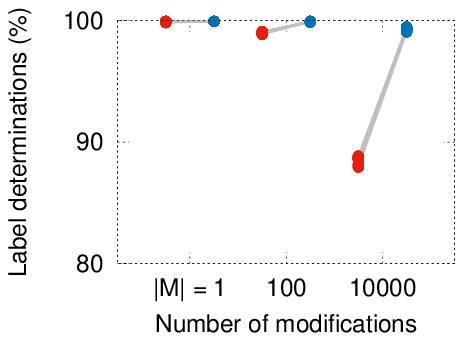}
\end{center}
\end{minipage}
\\ \hline
\multicolumn{3}{c}{Data set: {\tt real-sim}}
\\ \hline
\begin{minipage}{0.32\hsize}
\begin{center}
Feature modification

\includegraphics[clip,width=5cm]{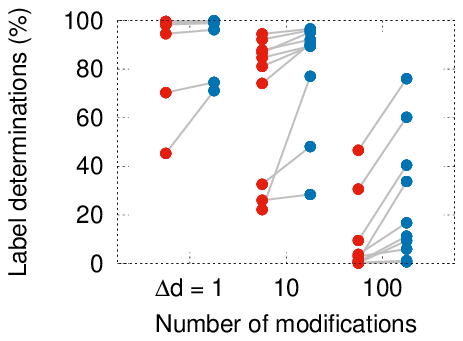}
\end{center}
\end{minipage}
&
\begin{minipage}{0.32\hsize}
\begin{center}
Instance modification

\includegraphics[clip,width=5cm]{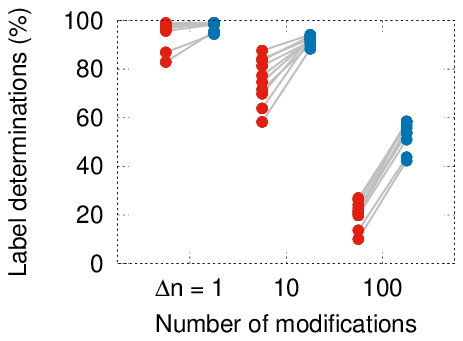}
\end{center}
\end{minipage}
&
\begin{minipage}{0.32\hsize}
\begin{center}
Spot modification

\includegraphics[clip,width=5cm]{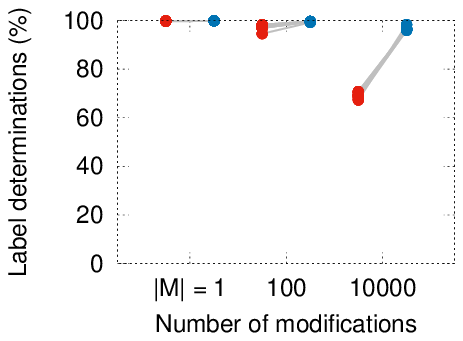}
\end{center}
\end{minipage}
\\ \hline
\multicolumn{3}{c}{Data set: {\tt rcv1-train}}
\\ \hline
\begin{minipage}{0.32\hsize}
\begin{center}
Feature modification

\includegraphics[clip,width=5cm]{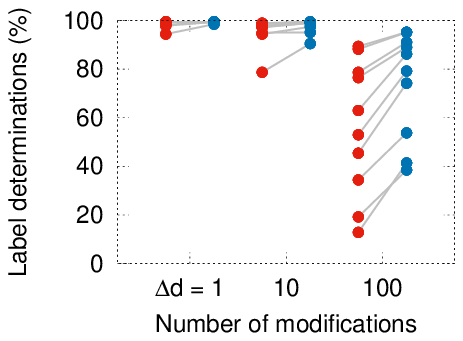}
\end{center}
\end{minipage}
&
\begin{minipage}{0.32\hsize}
\begin{center}
Instance modification

\includegraphics[clip,width=5cm]{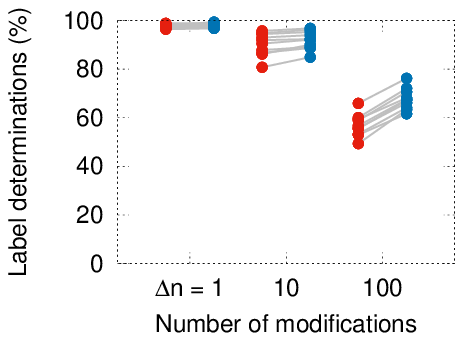}
\end{center}
\end{minipage}
&
\begin{minipage}{0.32\hsize}
\begin{center}
Spot modification

\includegraphics[clip,width=5cm]{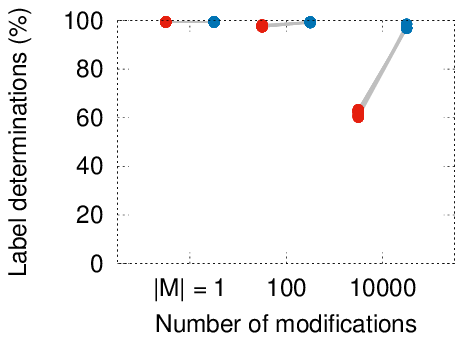}
\end{center}
\end{minipage}
\\ \hline
\end{tabular}
\end{figure}
\begin{figure}[p]
\caption{Label determination rate for $\lambda=0.01$}
\label{fig:experiment-all-2}
\begin{tabular}{ccc}
\hline
\multicolumn{3}{c}{Data set: {\tt kdda}}
\\ \hline
\begin{minipage}{0.32\hsize}
\begin{center}
Feature modification

\includegraphics[clip,width=5cm]{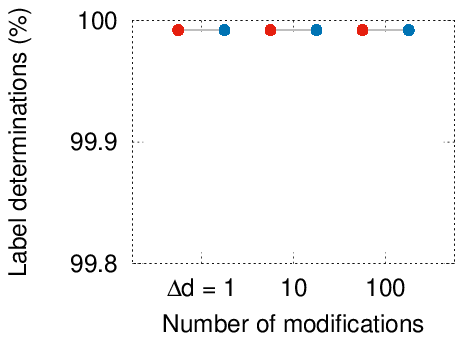}
\end{center}
\end{minipage}
&
\begin{minipage}{0.32\hsize}
\begin{center}
Instance modification

\includegraphics[clip,width=5cm]{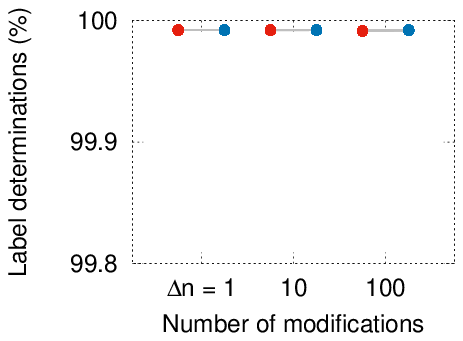}
\end{center}
\end{minipage}
&
\begin{minipage}{0.32\hsize}
\begin{center}
Spot modification

\includegraphics[clip,width=5cm]{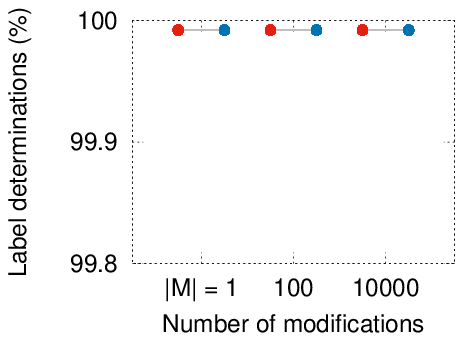}
\end{center}
\end{minipage}
\\ \hline
\multicolumn{3}{c}{Data set: {\tt url}}
\\ \hline
\begin{minipage}{0.32\hsize}
\begin{center}
Feature modification

\includegraphics[clip,width=5cm]{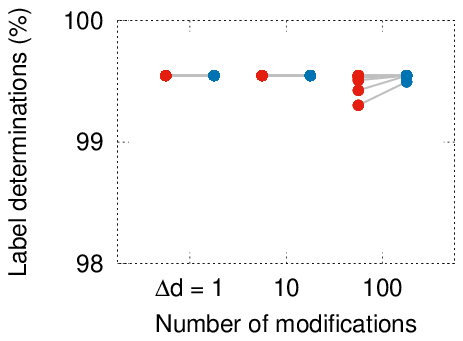}
\end{center}
\end{minipage}
&
\begin{minipage}{0.32\hsize}
\begin{center}
Instance modification

\includegraphics[clip,width=5cm]{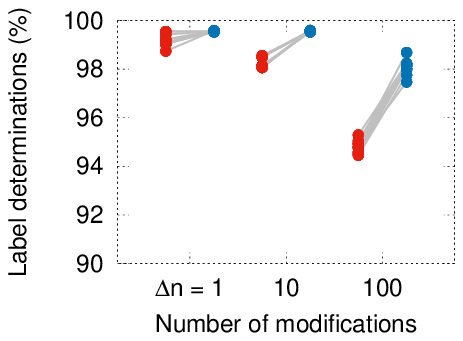}
\end{center}
\end{minipage}
&
\begin{minipage}{0.32\hsize}
\begin{center}
Spot modification

\includegraphics[clip,width=5cm]{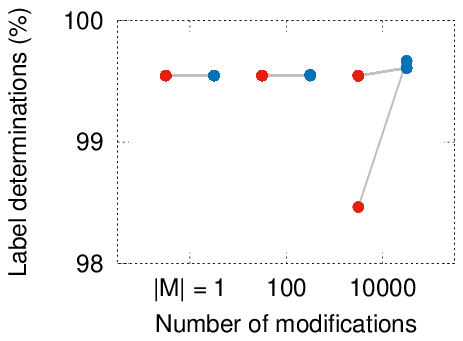}
\end{center}
\end{minipage}
\\ \hline
\multicolumn{3}{c}{Data set: {\tt news20}}
\\ \hline
\begin{minipage}{0.32\hsize}
\begin{center}
Feature modification

\includegraphics[clip,width=5cm]{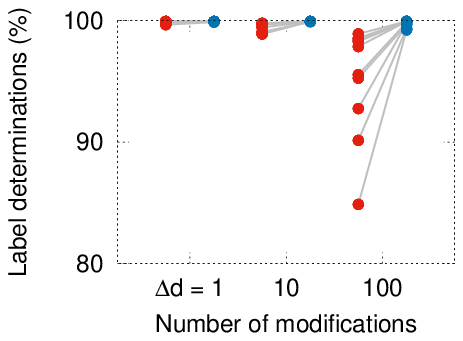}
\end{center}
\end{minipage}
&
\begin{minipage}{0.32\hsize}
\begin{center}
Instance modification

\includegraphics[clip,width=5cm]{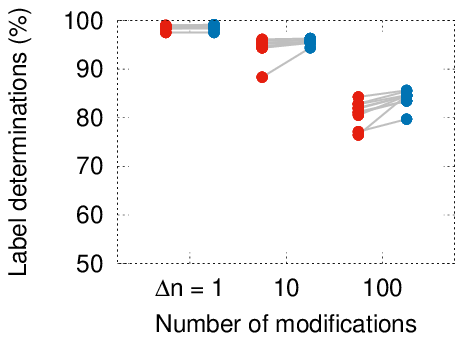}
\end{center}
\end{minipage}
&
\begin{minipage}{0.32\hsize}
\begin{center}
Spot modification

\includegraphics[clip,width=5cm]{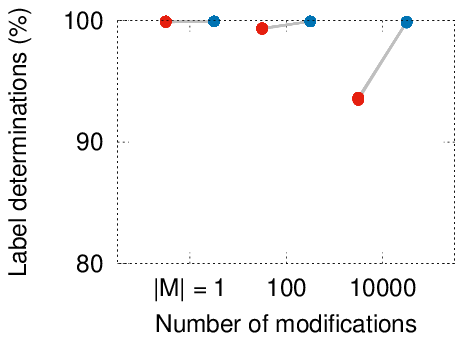}
\end{center}
\end{minipage}
\\ \hline
\multicolumn{3}{c}{Data set: {\tt real-sim}}
\\ \hline
\begin{minipage}{0.32\hsize}
\begin{center}
Feature modification

\includegraphics[clip,width=5cm]{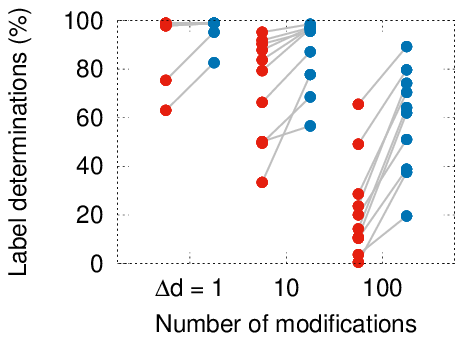}
\end{center}
\end{minipage}
&
\begin{minipage}{0.32\hsize}
\begin{center}
Instance modification

\includegraphics[clip,width=5cm]{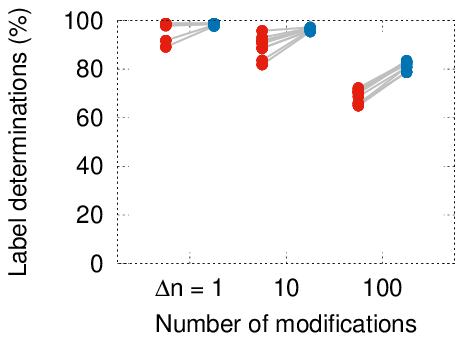}
\end{center}
\end{minipage}
&
\begin{minipage}{0.32\hsize}
\begin{center}
Spot modification

\includegraphics[clip,width=5cm]{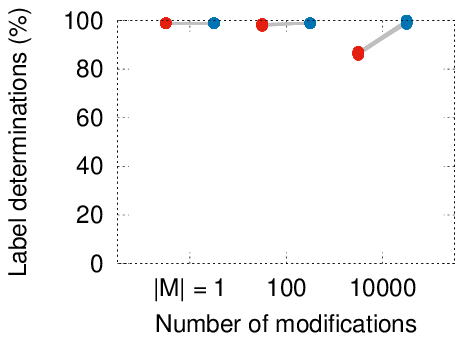}
\end{center}
\end{minipage}
\\ \hline
\multicolumn{3}{c}{Data set: {\tt rcv1-train}}
\\ \hline
\begin{minipage}{0.32\hsize}
\begin{center}
Feature modification

\includegraphics[clip,width=5cm]{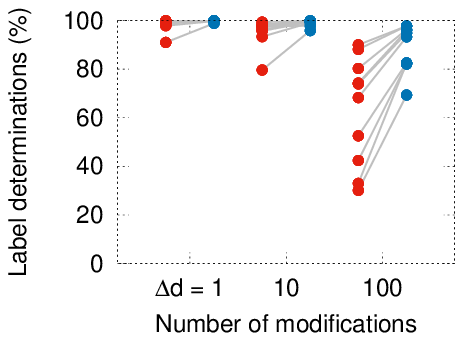}
\end{center}
\end{minipage}
&
\begin{minipage}{0.32\hsize}
\begin{center}
Instance modification

\includegraphics[clip,width=5cm]{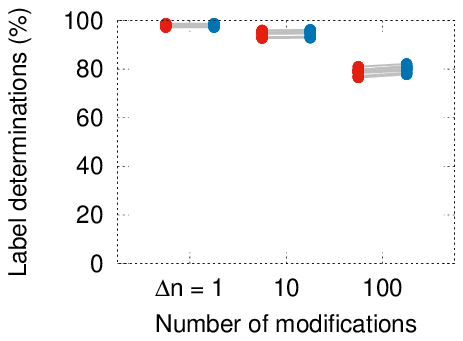}
\end{center}
\end{minipage}
&
\begin{minipage}{0.32\hsize}
\begin{center}
Spot modification

\includegraphics[clip,width=5cm]{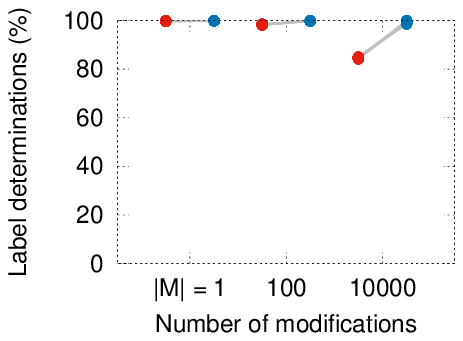}
\end{center}
\end{minipage}
\\ \hline
\end{tabular}
\end{figure}
\begin{figure}[p]
\caption{Label determination rate for $\lambda=0.1$}
\label{fig:experiment-all-3}
\begin{tabular}{ccc}
\hline
\multicolumn{3}{c}{Data set: {\tt kdda}}
\\ \hline
\begin{minipage}{0.32\hsize}
\begin{center}
Feature modification

\includegraphics[clip,width=5cm]{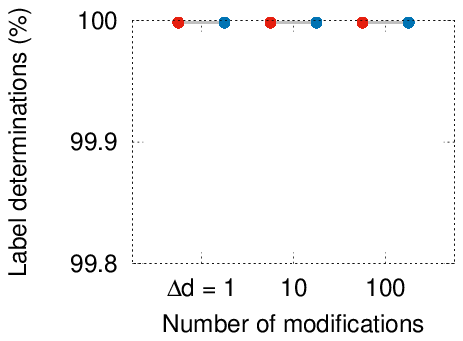}
\end{center}
\end{minipage}
&
\begin{minipage}{0.32\hsize}
\begin{center}
Instance modification

\includegraphics[clip,width=5cm]{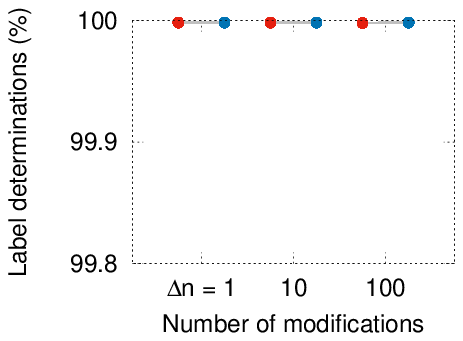}
\end{center}
\end{minipage}
&
\begin{minipage}{0.32\hsize}
\begin{center}
Spot modification

\includegraphics[clip,width=5cm]{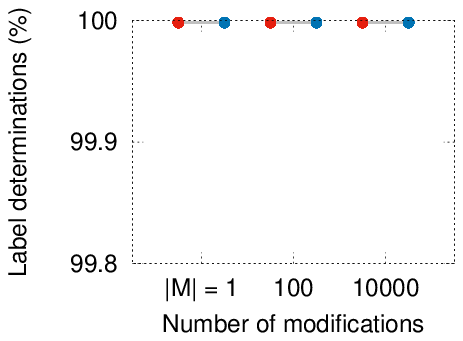}
\end{center}
\end{minipage}
\\ \hline
\multicolumn{3}{c}{Data set: {\tt url}}
\\ \hline
\begin{minipage}{0.32\hsize}
\begin{center}
Feature modification

\includegraphics[clip,width=5cm]{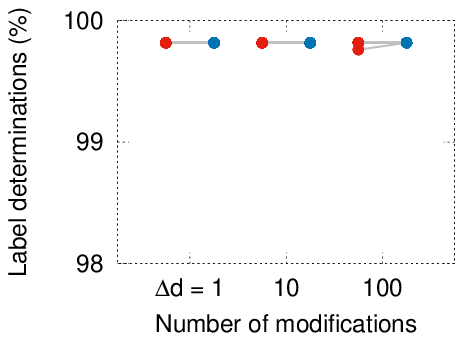}
\end{center}
\end{minipage}
&
\begin{minipage}{0.32\hsize}
\begin{center}
Instance modification

\includegraphics[clip,width=5cm]{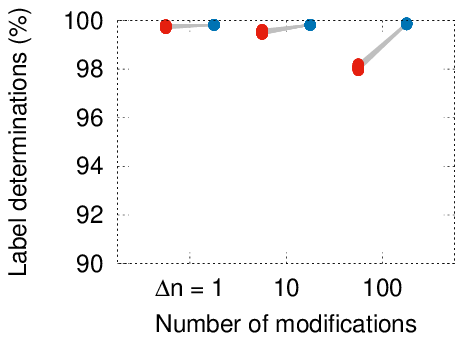}
\end{center}
\end{minipage}
&
\begin{minipage}{0.32\hsize}
\begin{center}
Spot modification

\includegraphics[clip,width=5cm]{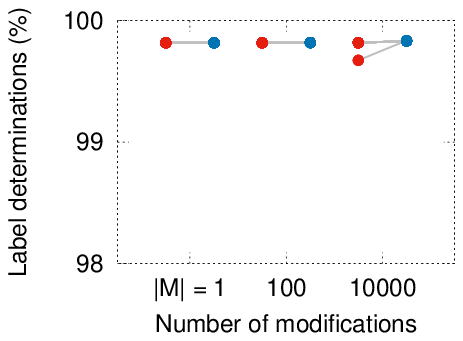}
\end{center}
\end{minipage}
\\ \hline
\multicolumn{3}{c}{Data set: {\tt news20}}
\\ \hline
\begin{minipage}{0.32\hsize}
\begin{center}
Feature modification

\includegraphics[clip,width=5cm]{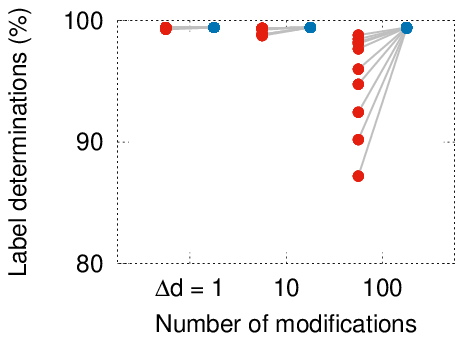}
\end{center}
\end{minipage}
&
\begin{minipage}{0.32\hsize}
\begin{center}
Instance modification

\includegraphics[clip,width=5cm]{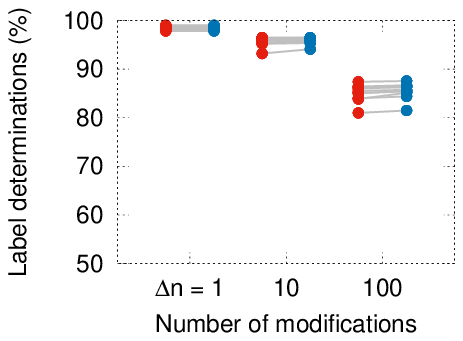}
\end{center}
\end{minipage}
&
\begin{minipage}{0.32\hsize}
\begin{center}
Spot modification

\includegraphics[clip,width=5cm]{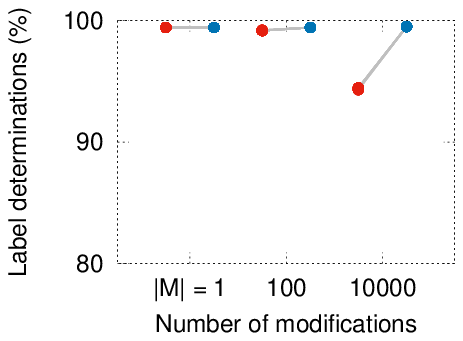}
\end{center}
\end{minipage}
\\ \hline
\multicolumn{3}{c}{Data set: {\tt real-sim}}
\\ \hline
\begin{minipage}{0.32\hsize}
\begin{center}
Feature modification

\includegraphics[clip,width=5cm]{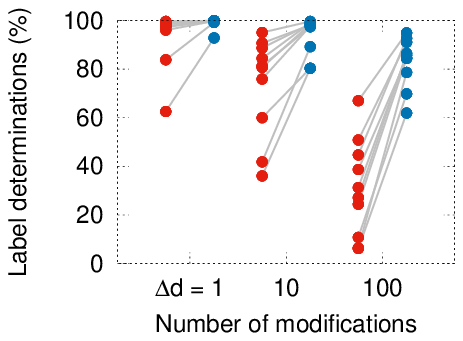}
\end{center}
\end{minipage}
&
\begin{minipage}{0.32\hsize}
\begin{center}
Instance modification

\includegraphics[clip,width=5cm]{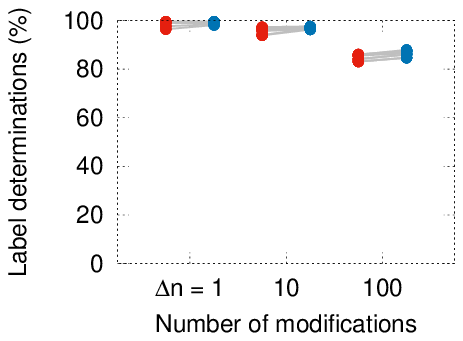}
\end{center}
\end{minipage}
&
\begin{minipage}{0.32\hsize}
\begin{center}
Spot modification

\includegraphics[clip,width=5cm]{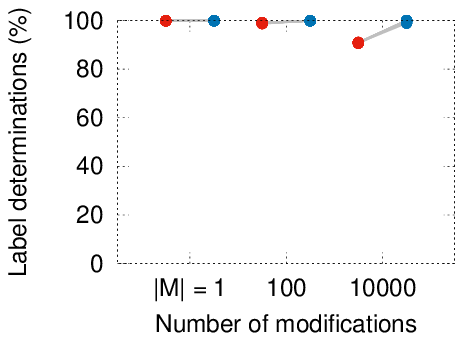}
\end{center}
\end{minipage}
\\ \hline
\multicolumn{3}{c}{Data set: {\tt rcv1-train}}
\\ \hline
\begin{minipage}{0.32\hsize}
\begin{center}
Feature modification

\includegraphics[clip,width=5cm]{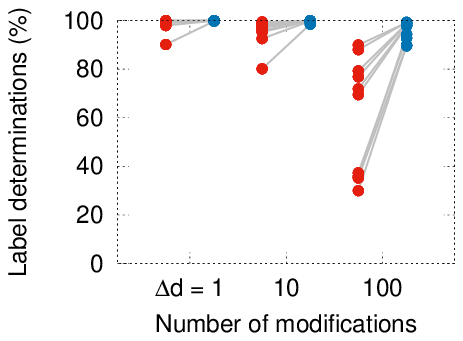}
\end{center}
\end{minipage}
&
\begin{minipage}{0.32\hsize}
\begin{center}
Instance modification

\includegraphics[clip,width=5cm]{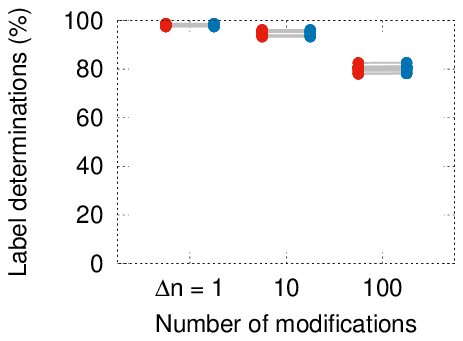}
\end{center}
\end{minipage}
&
\begin{minipage}{0.32\hsize}
\begin{center}
Spot modification

\includegraphics[clip,width=5cm]{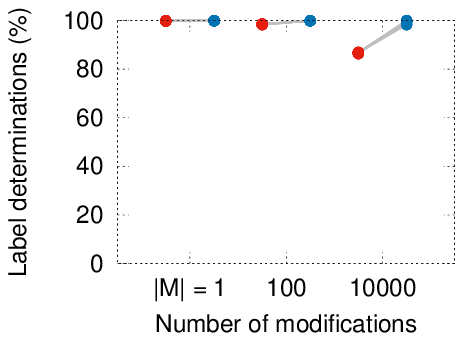}
\end{center}
\end{minipage}
\\ \hline
\end{tabular}
\end{figure}
\begin{figure}[p]
\caption{Label determination rate for $\lambda=1.0$}
\label{fig:experiment-all-4}
\begin{tabular}{ccc}
\hline
\multicolumn{3}{c}{Data set: {\tt kdda}}
\\ \hline
\begin{minipage}{0.32\hsize}
\begin{center}
Feature modification

\includegraphics[clip,width=5cm]{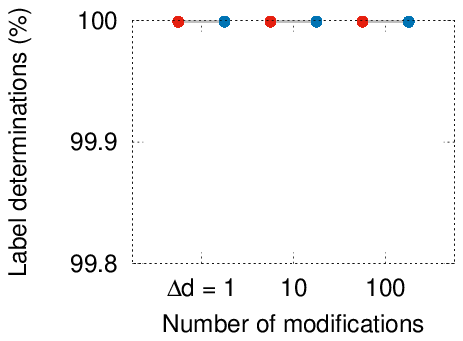}
\end{center}
\end{minipage}
&
\begin{minipage}{0.32\hsize}
\begin{center}
Instance modification

\includegraphics[clip,width=5cm]{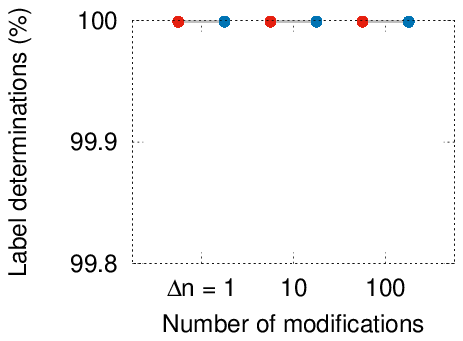}
\end{center}
\end{minipage}
&
\begin{minipage}{0.32\hsize}
\begin{center}
Spot modification

\includegraphics[clip,width=5cm]{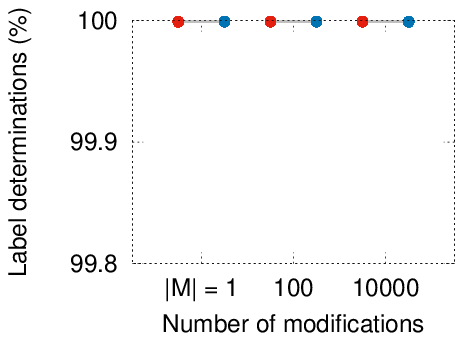}
\end{center}
\end{minipage}
\\ \hline
\multicolumn{3}{c}{Data set: {\tt url}}
\\ \hline
\begin{minipage}{0.32\hsize}
\begin{center}
Feature modification

\includegraphics[clip,width=5cm]{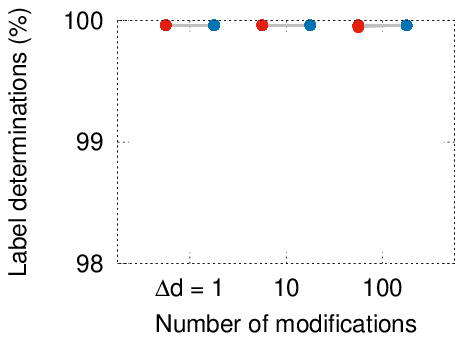}
\end{center}
\end{minipage}
&
\begin{minipage}{0.32\hsize}
\begin{center}
Instance modification

\includegraphics[clip,width=5cm]{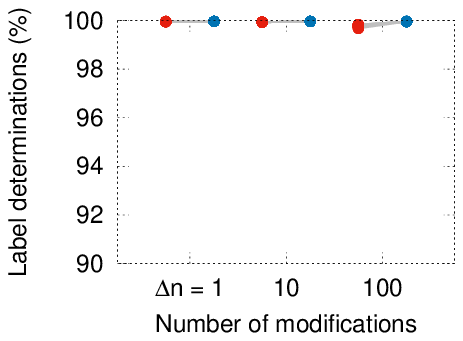}
\end{center}
\end{minipage}
&
\begin{minipage}{0.32\hsize}
\begin{center}
Spot modification

\includegraphics[clip,width=5cm]{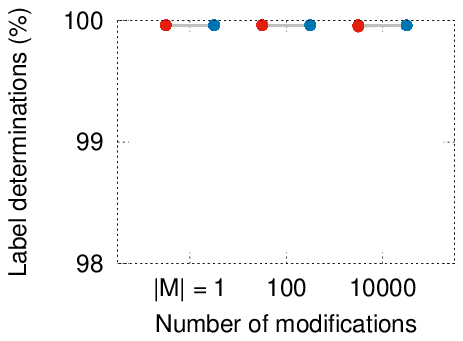}
\end{center}
\end{minipage}
\\ \hline
\multicolumn{3}{c}{Data set: {\tt news20}}
\\ \hline
\begin{minipage}{0.32\hsize}
\begin{center}
Feature modification

\includegraphics[clip,width=5cm]{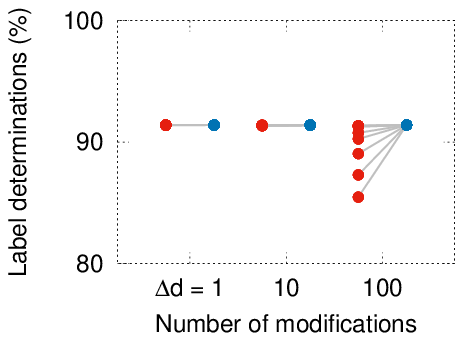}
\end{center}
\end{minipage}
&
\begin{minipage}{0.32\hsize}
\begin{center}
Instance modification

\includegraphics[clip,width=5cm]{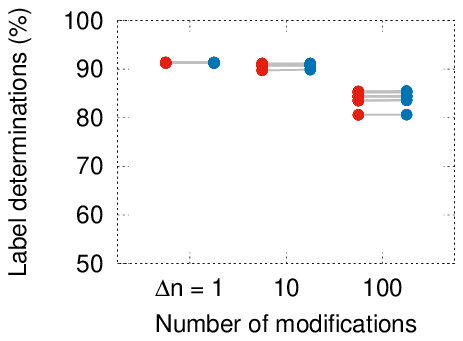}
\end{center}
\end{minipage}
&
\begin{minipage}{0.32\hsize}
\begin{center}
Spot modification

\includegraphics[clip,width=5cm]{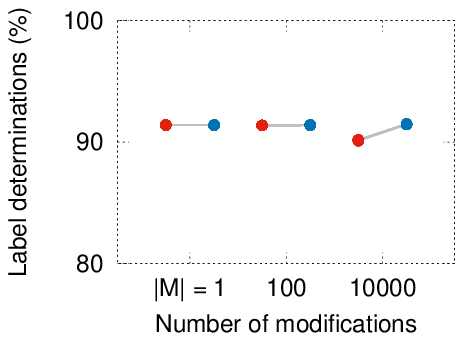}
\end{center}
\end{minipage}
\\ \hline
\multicolumn{3}{c}{Data set: {\tt real-sim}}
\\ \hline
\begin{minipage}{0.32\hsize}
\begin{center}
Feature modification

\includegraphics[clip,width=5cm]{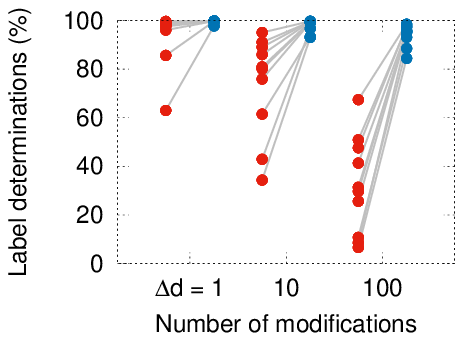}
\end{center}
\end{minipage}
&
\begin{minipage}{0.32\hsize}
\begin{center}
Instance modification

\includegraphics[clip,width=5cm]{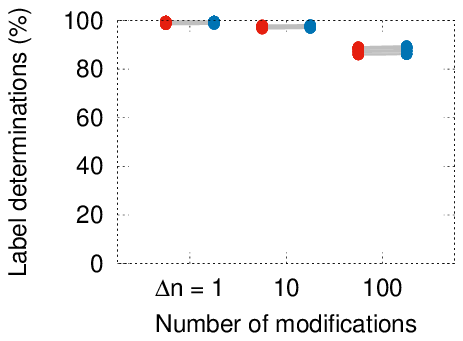}
\end{center}
\end{minipage}
&
\begin{minipage}{0.32\hsize}
\begin{center}
Spot modification

\includegraphics[clip,width=5cm]{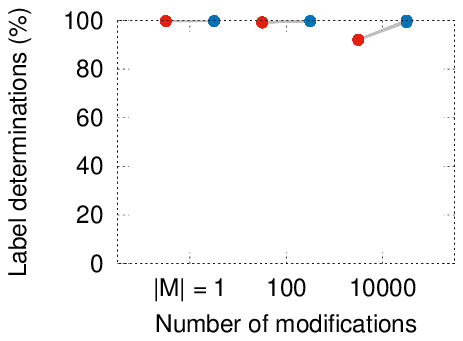}
\end{center}
\end{minipage}
\\ \hline
\multicolumn{3}{c}{Data set: {\tt rcv1-train}}
\\ \hline
\begin{minipage}{0.32\hsize}
\begin{center}
Feature modification

\includegraphics[clip,width=5cm]{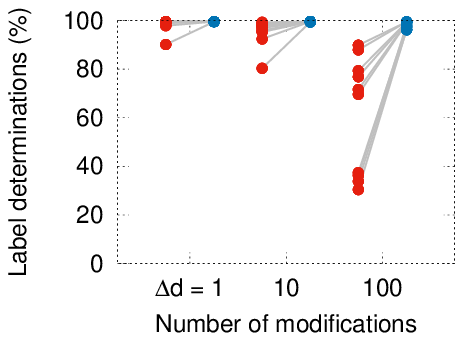}
\end{center}
\end{minipage}
&
\begin{minipage}{0.32\hsize}
\begin{center}
Instance modification

\includegraphics[clip,width=5cm]{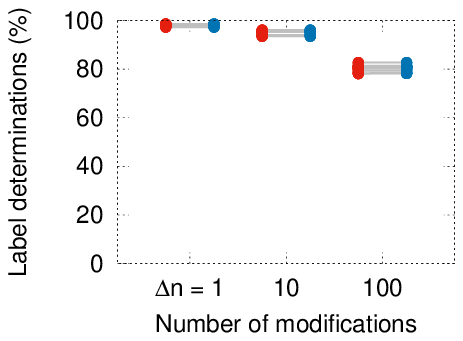}
\end{center}
\end{minipage}
&
\begin{minipage}{0.32\hsize}
\begin{center}
Spot modification

\includegraphics[clip,width=5cm]{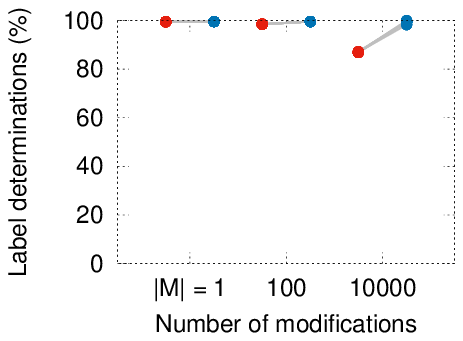}
\end{center}
\end{minipage}
\\ \hline
\end{tabular}
\end{figure}
\begin{figure}[p]
\caption{Upper bound of the parameter change $\|\tilde{\bm{w}}^* - \hat{\bm{w}}^*\|_2$ for $\lambda=0.001$}
\label{fig:experiment-all-5}
\begin{tabular}{ccc}
\hline
\multicolumn{3}{c}{Data set: {\tt kdda}}
\\ \hline
\begin{minipage}{0.32\hsize}
\begin{center}
Feature modification

\includegraphics[clip,width=5cm]{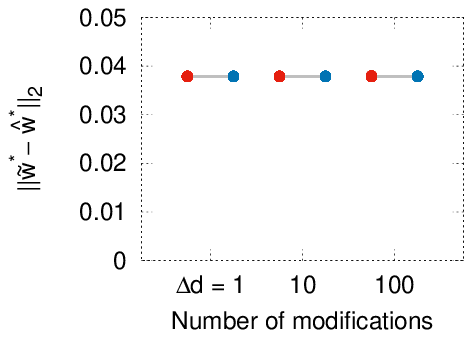}
\end{center}
\end{minipage}
&
\begin{minipage}{0.32\hsize}
\begin{center}
Instance modification

\includegraphics[clip,width=5cm]{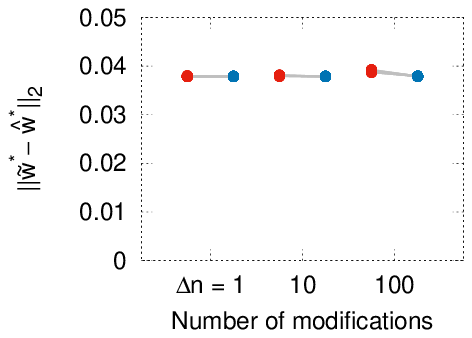}
\end{center}
\end{minipage}
&
\begin{minipage}{0.32\hsize}
\begin{center}
Spot modification

\includegraphics[clip,width=5cm]{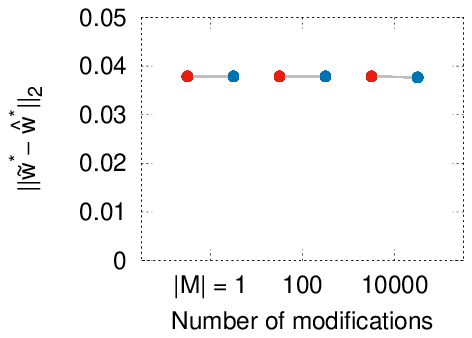}
\end{center}
\end{minipage}
\\ \hline
\multicolumn{3}{c}{Data set: {\tt url}}
\\ \hline
\begin{minipage}{0.32\hsize}
\begin{center}
Feature modification

\includegraphics[clip,width=5cm]{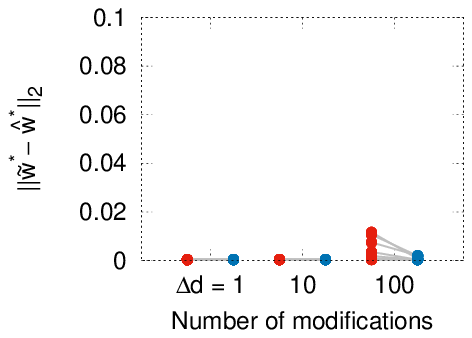}
\end{center}
\end{minipage}
&
\begin{minipage}{0.32\hsize}
\begin{center}
Instance modification

\includegraphics[clip,width=5cm]{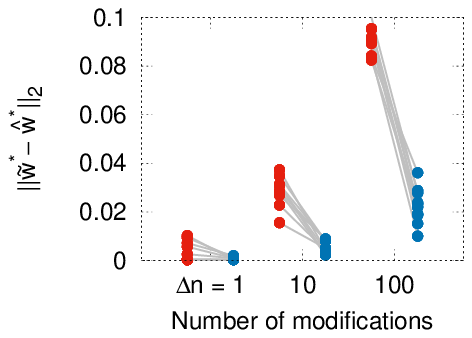}
\end{center}
\end{minipage}
&
\begin{minipage}{0.32\hsize}
\begin{center}
Spot modification

\includegraphics[clip,width=5cm]{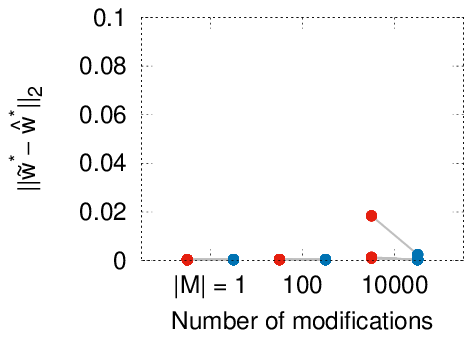}
\end{center}
\end{minipage}
\\ \hline
\multicolumn{3}{c}{Data set: {\tt news20}}
\\ \hline
\begin{minipage}{0.32\hsize}
\begin{center}
Feature modification

\includegraphics[clip,width=5cm]{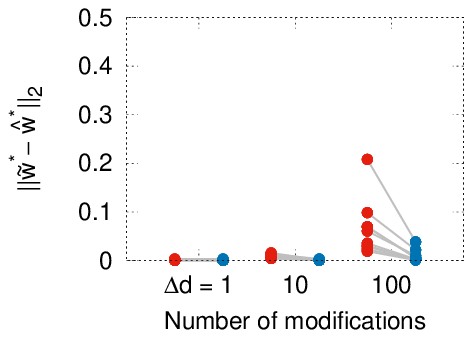}
\end{center}
\end{minipage}
&
\begin{minipage}{0.32\hsize}
\begin{center}
Instance modification

\includegraphics[clip,width=5cm]{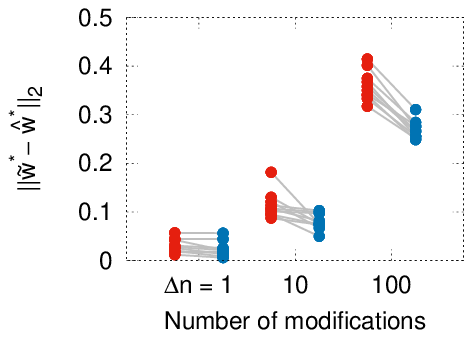}
\end{center}
\end{minipage}
&
\begin{minipage}{0.32\hsize}
\begin{center}
Spot modification

\includegraphics[clip,width=5cm]{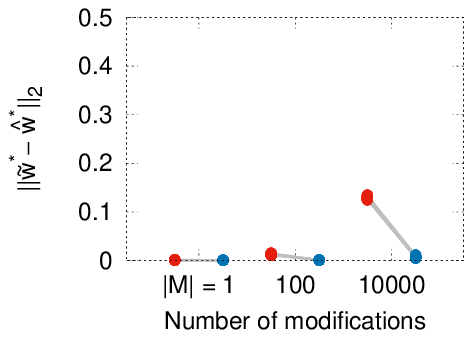}
\end{center}
\end{minipage}
\\ \hline
\multicolumn{3}{c}{Data set: {\tt real-sim}}
\\ \hline
\begin{minipage}{0.32\hsize}
\begin{center}
Feature modification

\includegraphics[clip,width=5cm]{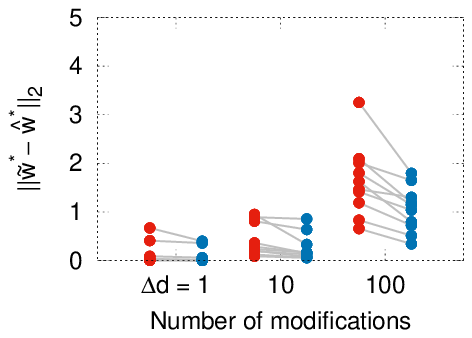}
\end{center}
\end{minipage}
&
\begin{minipage}{0.32\hsize}
\begin{center}
Instance modification

\includegraphics[clip,width=5cm]{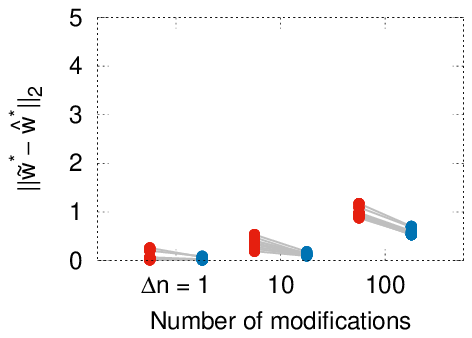}
\end{center}
\end{minipage}
&
\begin{minipage}{0.32\hsize}
\begin{center}
Spot modification

\includegraphics[clip,width=5cm]{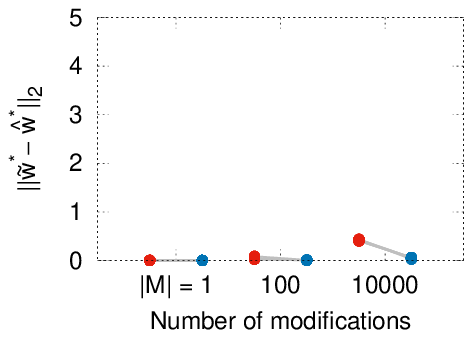}
\end{center}
\end{minipage}
\\ \hline
\multicolumn{3}{c}{Data set: {\tt rcv1-train}}
\\ \hline
\begin{minipage}{0.32\hsize}
\begin{center}
Feature modification

\includegraphics[clip,width=5cm]{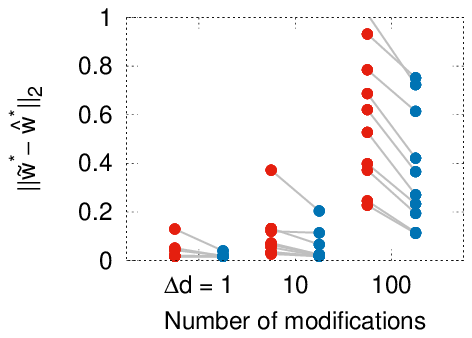}
\end{center}
\end{minipage}
&
\begin{minipage}{0.32\hsize}
\begin{center}
Instance modification

\includegraphics[clip,width=5cm]{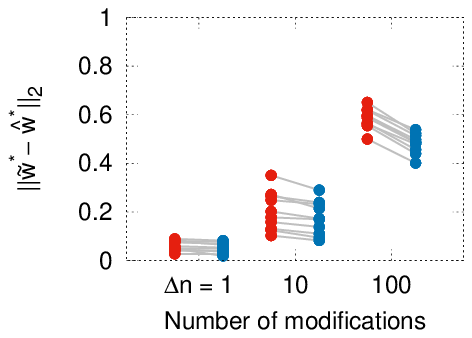}
\end{center}
\end{minipage}
&
\begin{minipage}{0.32\hsize}
\begin{center}
Spot modification

\includegraphics[clip,width=5cm]{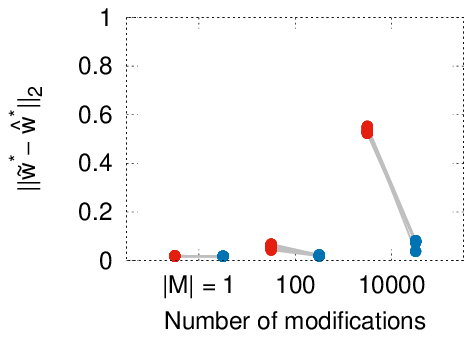}
\end{center}
\end{minipage}
\\ \hline
\end{tabular}
\end{figure}
\begin{figure}[p]
\caption{Upper bound of the parameter change $\|\tilde{\bm{w}}^* - \hat{\bm{w}}^*\|_2$ for $\lambda=0.01$}
\label{fig:experiment-all-6}
\begin{tabular}{ccc}
\hline
\multicolumn{3}{c}{Data set: {\tt kdda}}
\\ \hline
\begin{minipage}{0.32\hsize}
\begin{center}
Feature modification

\includegraphics[clip,width=5cm]{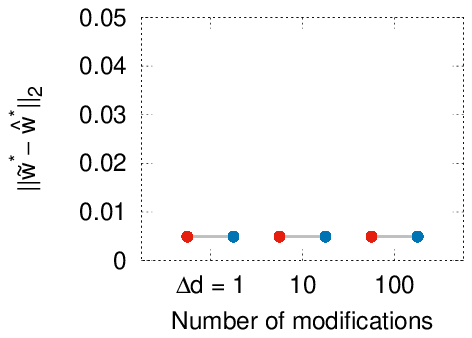}
\end{center}
\end{minipage}
&
\begin{minipage}{0.32\hsize}
\begin{center}
Instance modification

\includegraphics[clip,width=5cm]{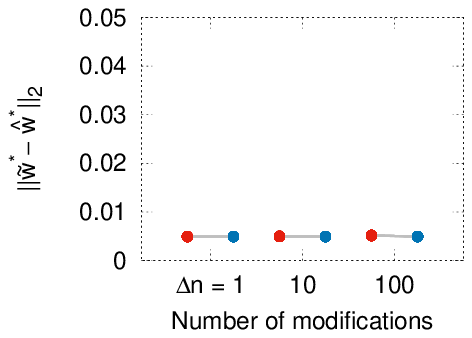}
\end{center}
\end{minipage}
&
\begin{minipage}{0.32\hsize}
\begin{center}
Spot modification

\includegraphics[clip,width=5cm]{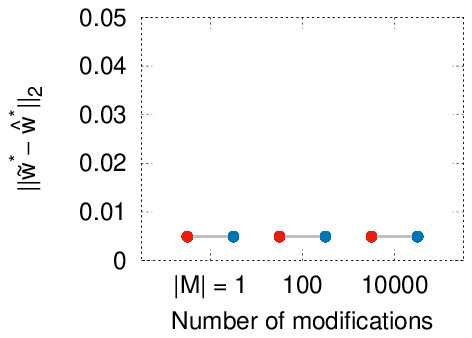}
\end{center}
\end{minipage}
\\ \hline
\multicolumn{3}{c}{Data set: {\tt url}}
\\ \hline
\begin{minipage}{0.32\hsize}
\begin{center}
Feature modification

\includegraphics[clip,width=5cm]{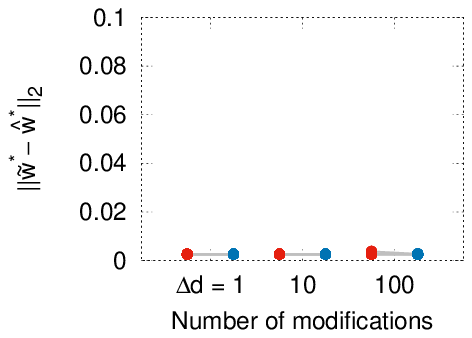}
\end{center}
\end{minipage}
&
\begin{minipage}{0.32\hsize}
\begin{center}
Instance modification

\includegraphics[clip,width=5cm]{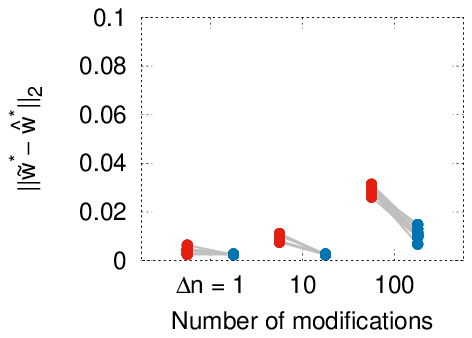}
\end{center}
\end{minipage}
&
\begin{minipage}{0.32\hsize}
\begin{center}
Spot modification

\includegraphics[clip,width=5cm]{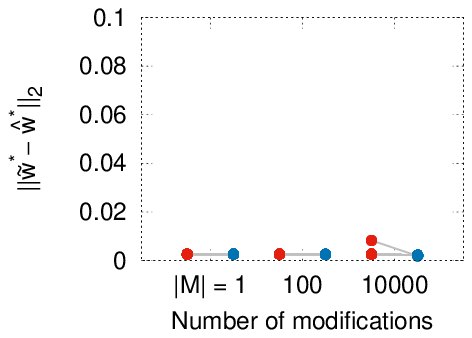}
\end{center}
\end{minipage}
\\ \hline
\multicolumn{3}{c}{Data set: {\tt news20}}
\\ \hline
\begin{minipage}{0.32\hsize}
\begin{center}
Feature modification

\includegraphics[clip,width=5cm]{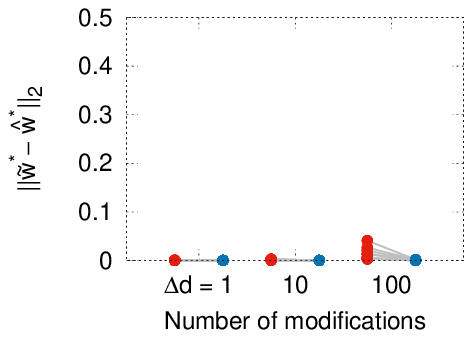}
\end{center}
\end{minipage}
&
\begin{minipage}{0.32\hsize}
\begin{center}
Instance modification

\includegraphics[clip,width=5cm]{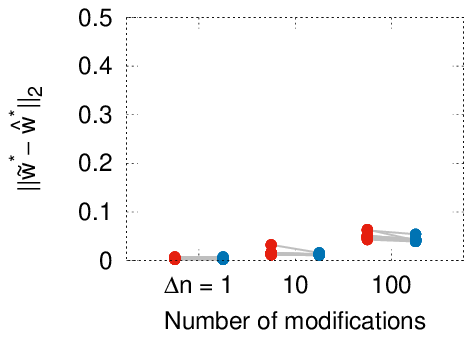}
\end{center}
\end{minipage}
&
\begin{minipage}{0.32\hsize}
\begin{center}
Spot modification

\includegraphics[clip,width=5cm]{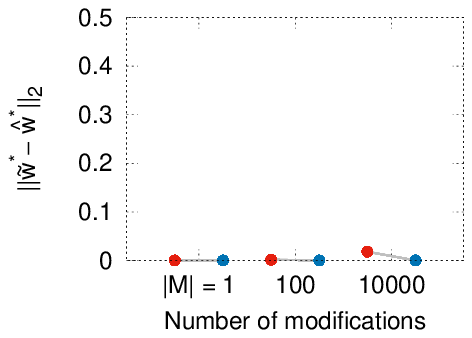}
\end{center}
\end{minipage}
\\ \hline
\multicolumn{3}{c}{Data set: {\tt real-sim}}
\\ \hline
\begin{minipage}{0.32\hsize}
\begin{center}
Feature modification

\includegraphics[clip,width=5cm]{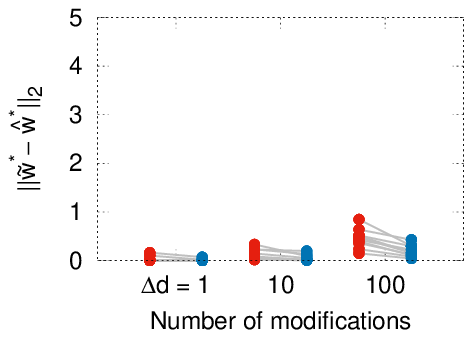}
\end{center}
\end{minipage}
&
\begin{minipage}{0.32\hsize}
\begin{center}
Instance modification

\includegraphics[clip,width=5cm]{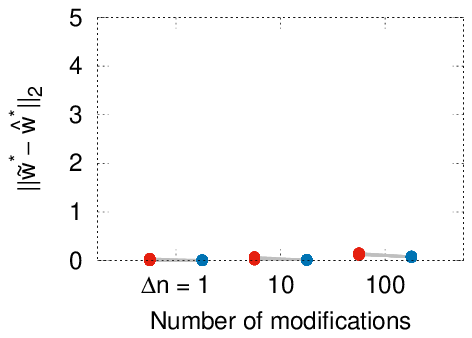}
\end{center}
\end{minipage}
&
\begin{minipage}{0.32\hsize}
\begin{center}
Spot modification

\includegraphics[clip,width=5cm]{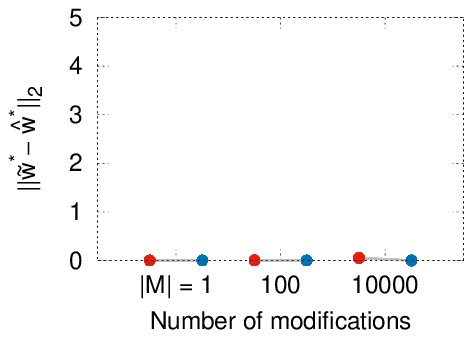}
\end{center}
\end{minipage}
\\ \hline
\multicolumn{3}{c}{Data set: {\tt rcv1-train}}
\\ \hline
\begin{minipage}{0.32\hsize}
\begin{center}
Feature modification

\includegraphics[clip,width=5cm]{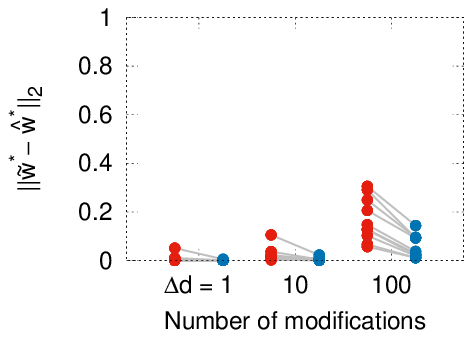}
\end{center}
\end{minipage}
&
\begin{minipage}{0.32\hsize}
\begin{center}
Instance modification

\includegraphics[clip,width=5cm]{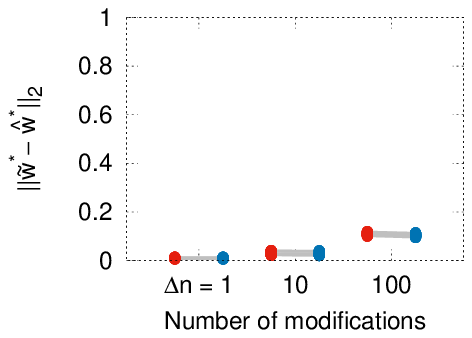}
\end{center}
\end{minipage}
&
\begin{minipage}{0.32\hsize}
\begin{center}
Spot modification

\includegraphics[clip,width=5cm]{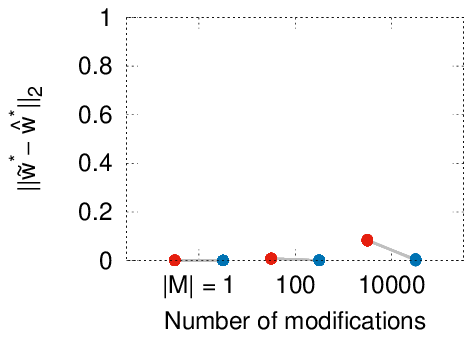}
\end{center}
\end{minipage}
\\ \hline
\end{tabular}
\end{figure}
\begin{figure}[p]
\caption{Upper bound of the parameter change $\|\tilde{\bm{w}}^* - \hat{\bm{w}}^*\|_2$ for $\lambda=0.1$}
\label{fig:experiment-all-7}
\begin{tabular}{ccc}
\hline
\multicolumn{3}{c}{Data set: {\tt kdda}}
\\ \hline
\begin{minipage}{0.32\hsize}
\begin{center}
Feature modification

\includegraphics[clip,width=5cm]{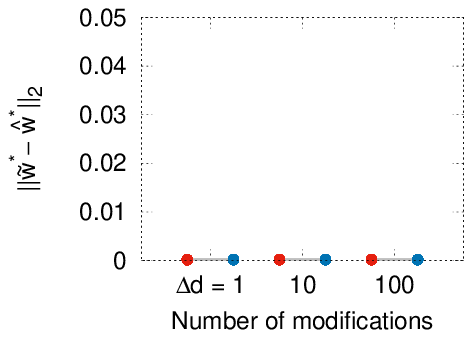}
\end{center}
\end{minipage}
&
\begin{minipage}{0.32\hsize}
\begin{center}
Instance modification

\includegraphics[clip,width=5cm]{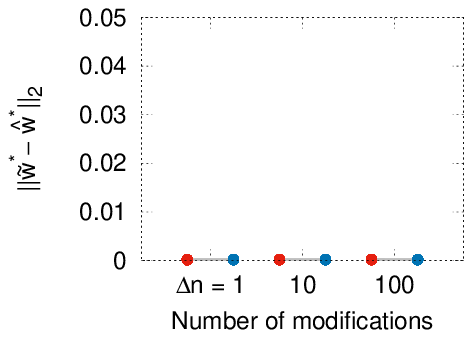}
\end{center}
\end{minipage}
&
\begin{minipage}{0.32\hsize}
\begin{center}
Spot modification

\includegraphics[clip,width=5cm]{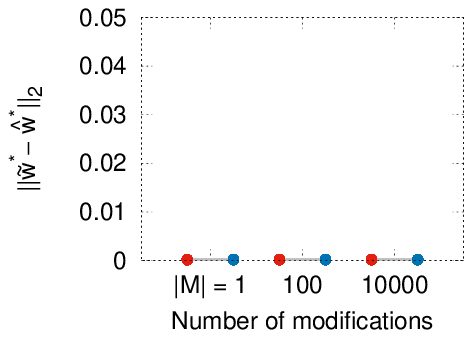}
\end{center}
\end{minipage}
\\ \hline
\multicolumn{3}{c}{Data set: {\tt url}}
\\ \hline
\begin{minipage}{0.32\hsize}
\begin{center}
Feature modification

\includegraphics[clip,width=5cm]{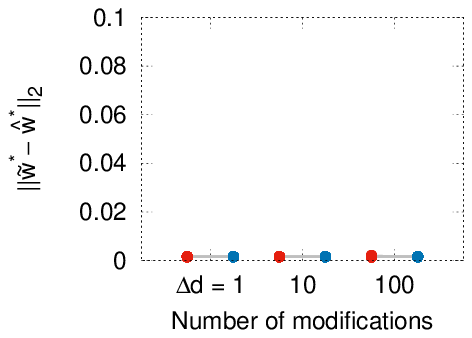}
\end{center}
\end{minipage}
&
\begin{minipage}{0.32\hsize}
\begin{center}
Instance modification

\includegraphics[clip,width=5cm]{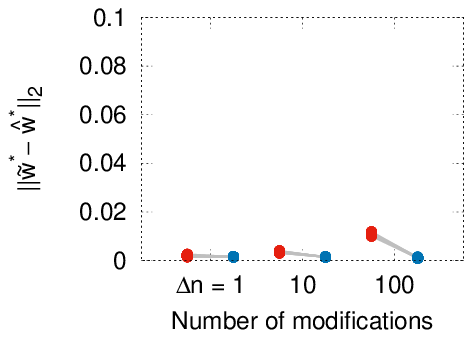}
\end{center}
\end{minipage}
&
\begin{minipage}{0.32\hsize}
\begin{center}
Spot modification

\includegraphics[clip,width=5cm]{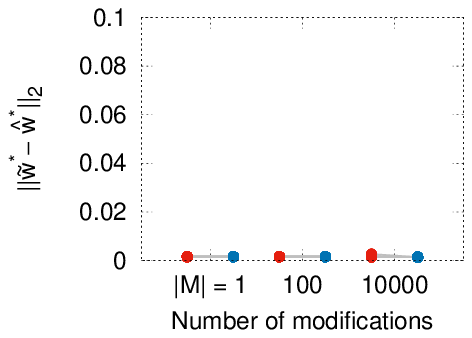}
\end{center}
\end{minipage}
\\ \hline
\multicolumn{3}{c}{Data set: {\tt news20}}
\\ \hline
\begin{minipage}{0.32\hsize}
\begin{center}
Feature modification

\includegraphics[clip,width=5cm]{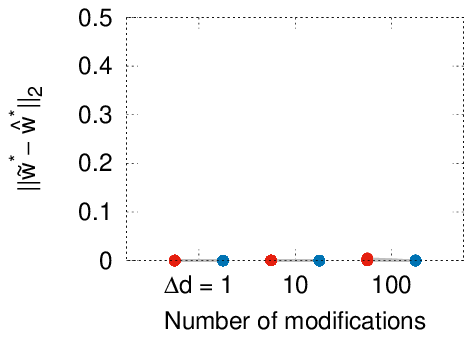}
\end{center}
\end{minipage}
&
\begin{minipage}{0.32\hsize}
\begin{center}
Instance modification

\includegraphics[clip,width=5cm]{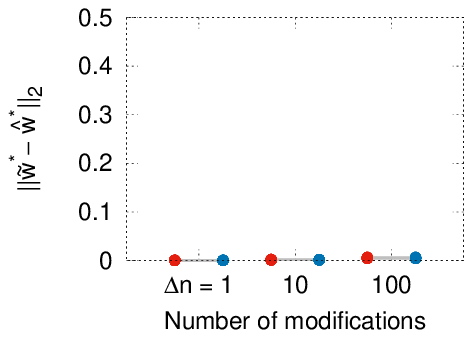}
\end{center}
\end{minipage}
&
\begin{minipage}{0.32\hsize}
\begin{center}
Spot modification

\includegraphics[clip,width=5cm]{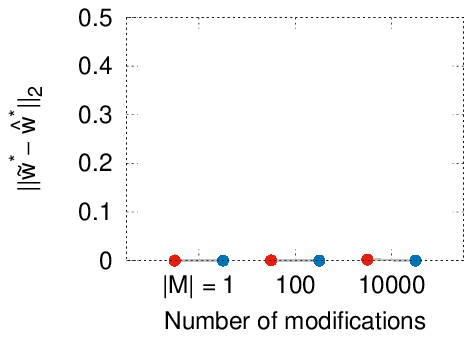}
\end{center}
\end{minipage}
\\ \hline
\multicolumn{3}{c}{Data set: {\tt real-sim}}
\\ \hline
\begin{minipage}{0.32\hsize}
\begin{center}
Feature modification

\includegraphics[clip,width=5cm]{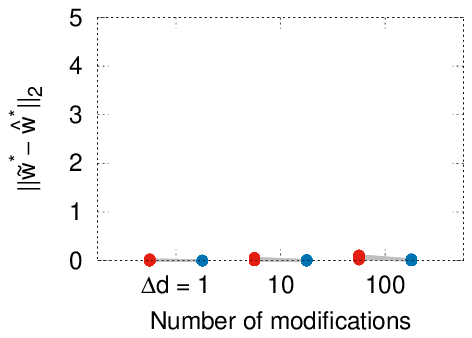}
\end{center}
\end{minipage}
&
\begin{minipage}{0.32\hsize}
\begin{center}
Instance modification

\includegraphics[clip,width=5cm]{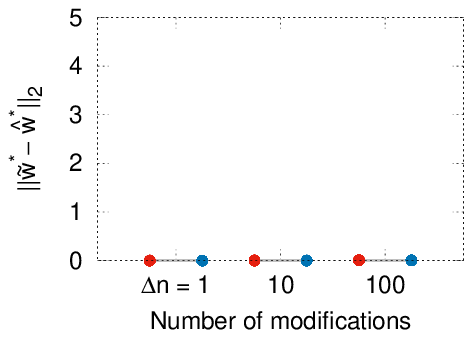}
\end{center}
\end{minipage}
&
\begin{minipage}{0.32\hsize}
\begin{center}
Spot modification

\includegraphics[clip,width=5cm]{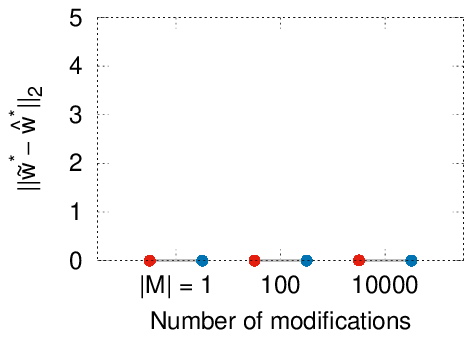}
\end{center}
\end{minipage}
\\ \hline
\multicolumn{3}{c}{Data set: {\tt rcv1-train}}
\\ \hline
\begin{minipage}{0.32\hsize}
\begin{center}
Feature modification

\includegraphics[clip,width=5cm]{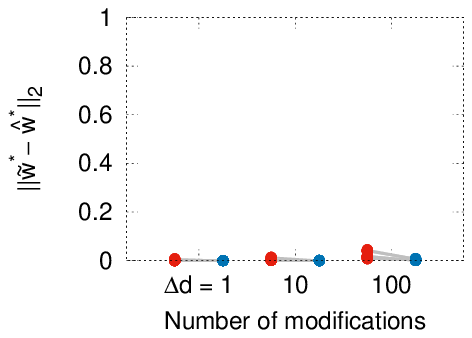}
\end{center}
\end{minipage}
&
\begin{minipage}{0.32\hsize}
\begin{center}
Instance modification

\includegraphics[clip,width=5cm]{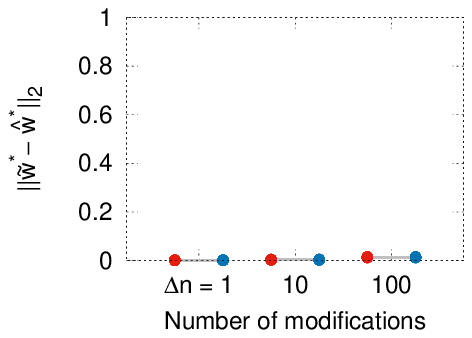}
\end{center}
\end{minipage}
&
\begin{minipage}{0.32\hsize}
\begin{center}
Spot modification

\includegraphics[clip,width=5cm]{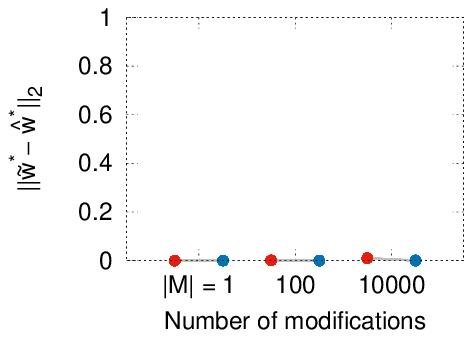}
\end{center}
\end{minipage}
\\ \hline
\end{tabular}
\end{figure}
\begin{figure}[p]
\caption{Upper bound of the parameter change $\|\tilde{\bm{w}}^* - \hat{\bm{w}}^*\|_2$ for $\lambda=1.0$}
\label{fig:experiment-all-8}
\begin{tabular}{ccc}
\hline
\multicolumn{3}{c}{Data set: {\tt kdda}}
\\ \hline
\begin{minipage}{0.32\hsize}
\begin{center}
Feature modification

\includegraphics[clip,width=5cm]{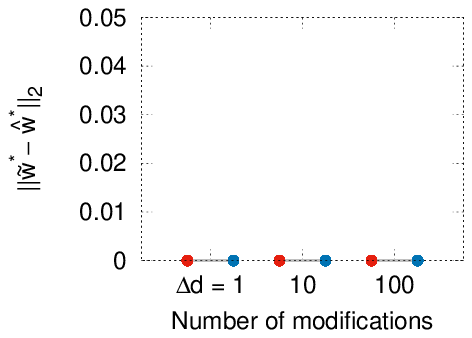}
\end{center}
\end{minipage}
&
\begin{minipage}{0.32\hsize}
\begin{center}
Instance modification

\includegraphics[clip,width=5cm]{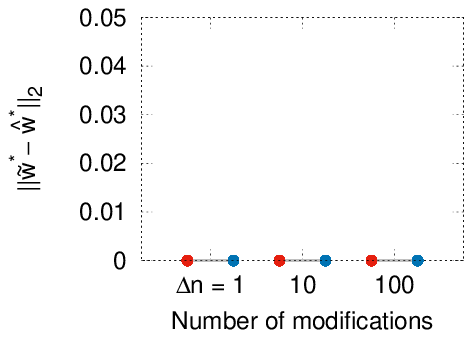}
\end{center}
\end{minipage}
&
\begin{minipage}{0.32\hsize}
\begin{center}
Spot modification

\includegraphics[clip,width=5cm]{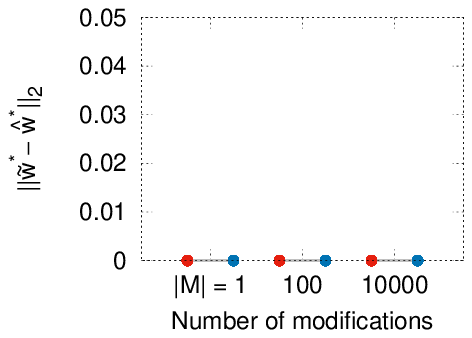}
\end{center}
\end{minipage}
\\ \hline
\multicolumn{3}{c}{Data set: {\tt url}}
\\ \hline
\begin{minipage}{0.32\hsize}
\begin{center}
Feature modification

\includegraphics[clip,width=5cm]{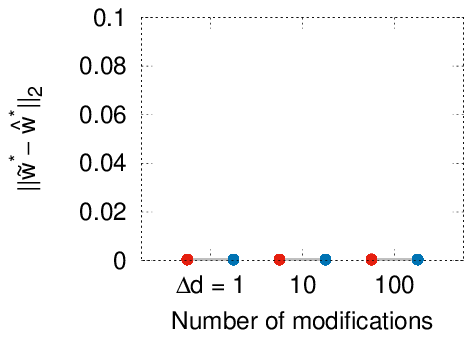}
\end{center}
\end{minipage}
&
\begin{minipage}{0.32\hsize}
\begin{center}
Instance modification

\includegraphics[clip,width=5cm]{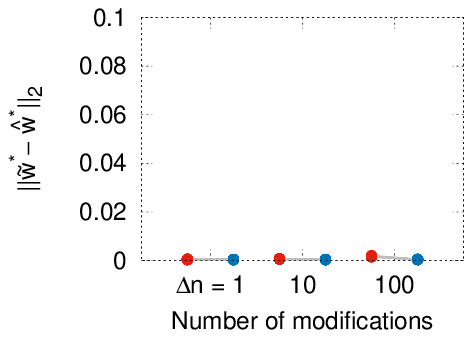}
\end{center}
\end{minipage}
&
\begin{minipage}{0.32\hsize}
\begin{center}
Spot modification

\includegraphics[clip,width=5cm]{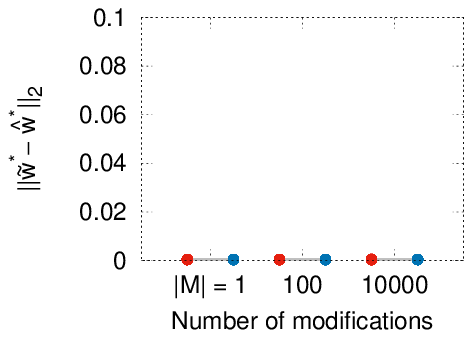}
\end{center}
\end{minipage}
\\ \hline
\multicolumn{3}{c}{Data set: {\tt news20}}
\\ \hline
\begin{minipage}{0.32\hsize}
\begin{center}
Feature modification

\includegraphics[clip,width=5cm]{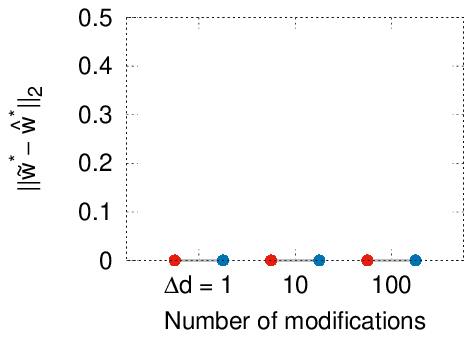}
\end{center}
\end{minipage}
&
\begin{minipage}{0.32\hsize}
\begin{center}
Instance modification

\includegraphics[clip,width=5cm]{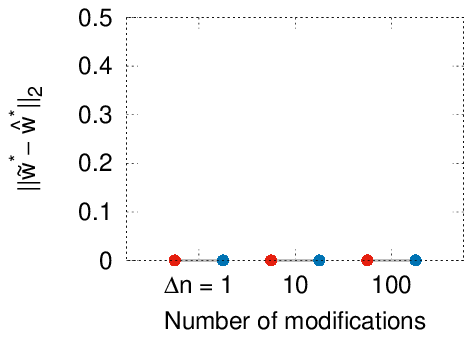}
\end{center}
\end{minipage}
&
\begin{minipage}{0.32\hsize}
\begin{center}
Spot modification

\includegraphics[clip,width=5cm]{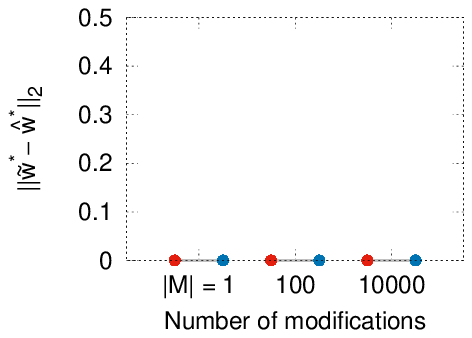}
\end{center}
\end{minipage}
\\ \hline
\multicolumn{3}{c}{Data set: {\tt real-sim}}
\\ \hline
\begin{minipage}{0.32\hsize}
\begin{center}
Feature modification

\includegraphics[clip,width=5cm]{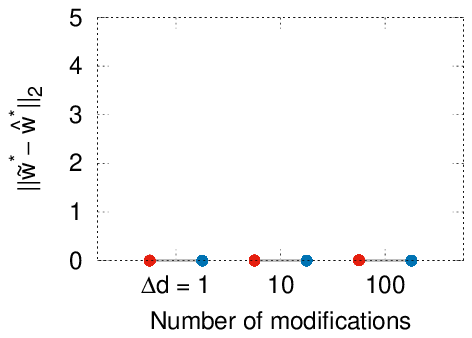}
\end{center}
\end{minipage}
&
\begin{minipage}{0.32\hsize}
\begin{center}
Instance modification

\includegraphics[clip,width=5cm]{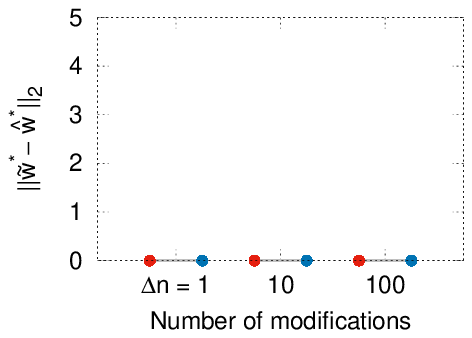}
\end{center}
\end{minipage}
&
\begin{minipage}{0.32\hsize}
\begin{center}
Spot modification

\includegraphics[clip,width=5cm]{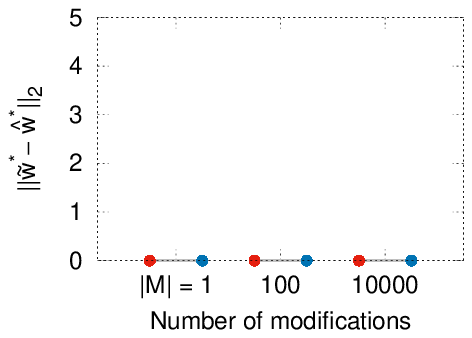}
\end{center}
\end{minipage}
\\ \hline
\multicolumn{3}{c}{Data set: {\tt rcv1-train}}
\\ \hline
\begin{minipage}{0.32\hsize}
\begin{center}
Feature modification

\includegraphics[clip,width=5cm]{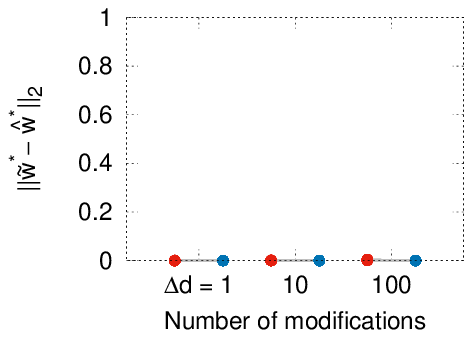}
\end{center}
\end{minipage}
&
\begin{minipage}{0.32\hsize}
\begin{center}
Instance modification

\includegraphics[clip,width=5cm]{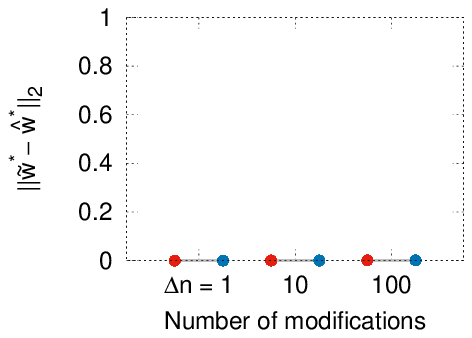}
\end{center}
\end{minipage}
&
\begin{minipage}{0.32\hsize}
\begin{center}
Spot modification

\includegraphics[clip,width=5cm]{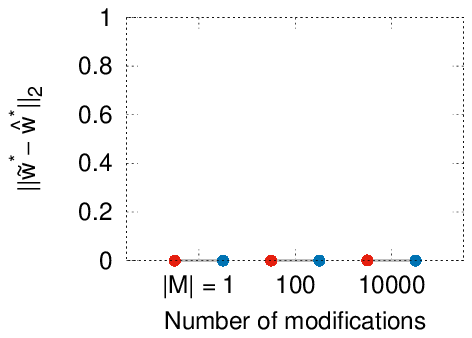}
\end{center}
\end{minipage}
\\ \hline
\end{tabular}
\end{figure}

\begin{table}[tp]
\caption{The ratio of the computation time of the bound computation to re-training (with warm-start). In all settings, the former cost is almost negligible.}
\label{table:computation-time-all}
\begin{center}
{\scriptsize
\begin{tabular}{c|rrr|rrr|rrr}
\hline
& \multicolumn{3}{c|}{(a) spot modification} & \multicolumn{3}{c|}{(b) instance modification} & \multicolumn{3}{c}{(c) feature modification} \\
 & \multicolumn{1}{c}{1} & \multicolumn{1}{c}{100} & \multicolumn{1}{c|}{10000}
 & \multicolumn{1}{c}{1} & \multicolumn{1}{c}{10} & \multicolumn{1}{c|}{100}
 & \multicolumn{1}{c}{1} & \multicolumn{1}{c}{10} & \multicolumn{1}{c}{100} \\
\hline
\multicolumn{10}{c}{$\lambda = 01$} \\
\hline
{\tt kdd-a} & $3\times 10^{-6}$ & $1\times 10^{-4}$ & $1\times 10^{-4}$ & $5\times 10^{-7}$ & $2\times 10^{-4}$ & $1\times 10^{-2}$ & $2\times 10^{-7}$ & $5\times 10^{-7}$ & $5\times 10^{-5}$ \\
{\tt url} & $5\times 10^{-8}$ & $2\times 10^{-5}$ & $5\times 10^{-5}$ & $3\times 10^{-7}$ & $2\times 10^{-3}$ & $3\times 10^{-2}$ & $3\times 10^{-5}$ & $3\times 10^{-6}$ & $4\times 10^{-5}$ \\
{\tt news20} & $2\times 10^{-6}$ & $3\times 10^{-5}$ & $5\times 10^{-3}$ & $7\times 10^{-5}$ & $2\times 10^{-3}$ & $6\times 10^{-2}$ & $4\times 10^{-6}$ & $1\times 10^{-5}$ & $4\times 10^{-4}$ \\
{\tt real-sim} & $9\times 10^{-6}$ & $1\times 10^{-4}$ & $7\times 10^{-3}$ & $5\times 10^{-5}$ & $4\times 10^{-4}$ & $2\times 10^{-3}$ & $1\times 10^{-4}$ & $6\times 10^{-4}$ & $2\times 10^{-3}$ \\
{\tt rcv1-train} & $2\times 10^{-5}$ & $2\times 10^{-4}$ & $1\times 10^{-2}$ & $2\times 10^{-4}$ & $5\times 10^{-4}$ & $6\times 10^{-3}$ & $5\times 10^{-5}$ & $2\times 10^{-4}$ & $2\times 10^{-3}$ \\
\hline
\multicolumn{10}{c}{$\lambda = 0$} \\
\hline
{\tt kdd-a} & $6\times 10^{-6}$ & $3\times 10^{-6}$ & $2\times 10^{-4}$ & $1\times 10^{-6}$ & $1\times 10^{-4}$ & $1\times 10^{-2}$ & $5\times 10^{-7}$ & $1\times 10^{-6}$ & $2\times 10^{-4}$ \\
{\tt url} & $1\times 10^{-7}$ & $3\times 10^{-6}$ & $8\times 10^{-5}$ & $6\times 10^{-6}$ & $3\times 10^{-3}$ & $5\times 10^{-2}$ & $9\times 10^{-8}$ & $1\times 10^{-6}$ & $3\times 10^{-5}$ \\
{\tt news20} & $3\times 10^{-6}$ & $6\times 10^{-5}$ & $7\times 10^{-3}$ & $8\times 10^{-4}$ & $3\times 10^{-3}$ & $5\times 10^{-2}$ & $1\times 10^{-5}$ & $4\times 10^{-5}$ & $7\times 10^{-4}$ \\
{\tt real-sim} & $1\times 10^{-5}$ & $2\times 10^{-4}$ & $9\times 10^{-3}$ & $7\times 10^{-5}$ & $4\times 10^{-4}$ & $3\times 10^{-3}$ & $2\times 10^{-4}$ & $1\times 10^{-3}$ & $4\times 10^{-3}$ \\
{\tt rcv1-train} & $2\times 10^{-5}$ & $3\times 10^{-4}$ & $2\times 10^{-2}$ & $2\times 10^{-4}$ & $1\times 10^{-3}$ & $9\times 10^{-3}$ & $8\times 10^{-5}$ & $4\times 10^{-4}$ & $2\times 10^{-3}$ \\
\hline
\multicolumn{10}{c}{$\lambda = 0.1$} \\
\hline
{\tt kdd-a} & $2\times 10^{-7}$ & $4\times 10^{-6}$ & $1\times 10^{-4}$ & $4\times 10^{-7}$ & $5\times 10^{-6}$ & $2\times 10^{-2}$ & $2\times 10^{-7}$ & $2\times 10^{-6}$ & $1\times 10^{-5}$ \\
{\tt url} & $1\times 10^{-7}$ & $2\times 10^{-6}$ & $8\times 10^{-5}$ & $4\times 10^{-7}$ & $4\times 10^{-4}$ & $5\times 10^{-2}$ & $2\times 10^{-7}$ & $1\times 10^{-6}$ & $5\times 10^{-5}$ \\
{\tt news20} & $4\times 10^{-6}$ & $4\times 10^{-5}$ & $8\times 10^{-3}$ & $6\times 10^{-5}$ & $3\times 10^{-3}$ & $1\times 10^{-2}$ & $6\times 10^{-6}$ & $3\times 10^{-5}$ & $7\times 10^{-4}$ \\
{\tt real-sim} & $1\times 10^{-5}$ & $2\times 10^{-4}$ & $2\times 10^{-2}$ & $8\times 10^{-5}$ & $5\times 10^{-4}$ & $5\times 10^{-3}$ & $2\times 10^{-4}$ & $1\times 10^{-3}$ & $6\times 10^{-3}$ \\
{\tt rcv1-train} & $3\times 10^{-5}$ & $4\times 10^{-4}$ & $2\times 10^{-2}$ & $3\times 10^{-4}$ & $2\times 10^{-3}$ & $9\times 10^{-3}$ & $9\times 10^{-5}$ & $5\times 10^{-4}$ & $1\times 10^{-3}$ \\
\hline
\multicolumn{10}{c}{$\lambda = 1$} \\
\hline
{\tt kdd-a} & $3\times 10^{-7}$ & $3\times 10^{-6}$ & $2\times 10^{-4}$ & $5\times 10^{-7}$ & $6\times 10^{-6}$ & $2\times 10^{-2}$ & $2\times 10^{-7}$ & $2\times 10^{-6}$ & $2\times 10^{-5}$ \\
{\tt url} & $1\times 10^{-7}$ & $2\times 10^{-6}$ & $3\times 10^{-4}$ & $5\times 10^{-7}$ & $8\times 10^{-3}$ & $5\times 10^{-2}$ & $2\times 10^{-7}$ & $2\times 10^{-6}$ & $7\times 10^{-5}$ \\
{\tt news20} & $4\times 10^{-6}$ & $1\times 10^{-4}$ & $6\times 10^{-3}$ & $2\times 10^{-4}$ & $2\times 10^{-3}$ & $9\times 10^{-3}$ & $6\times 10^{-6}$ & $2\times 10^{-5}$ & $1\times 10^{-3}$ \\
{\tt real-sim} & $3\times 10^{-5}$ & $2\times 10^{-4}$ & $1\times 10^{-2}$ & $1\times 10^{-4}$ & $5\times 10^{-4}$ & $5\times 10^{-3}$ & $2\times 10^{-4}$ & $2\times 10^{-3}$ & $8\times 10^{-3}$ \\
{\tt rcv1-train} & $5\times 10^{-5}$ & $4\times 10^{-4}$ & $2\times 10^{-2}$ & $3\times 10^{-4}$ & $2\times 10^{-3}$ & $2\times 10^{-2}$ & $2\times 10^{-4}$ & $4\times 10^{-4}$ & $4\times 10^{-3}$ \\
\hline
\end{tabular}
}
\end{center}
\end{table}

\section{Conclusions} \label{sec:conc}
In this paper, we present a method for quickly incorporating the data modification effect into the classifier. The proposed method provides bounds on the optimal solution with the cost proportional to the size of the data modification. 
%
%
The experimental results indicate that our bound computation method in \S\ref{sec:main} is highly effective when the number of modified elements is much smaller than the entire dataset size. 
%
%
In addition, partial optimization approach is also effective especially when the bounds in Theorem~\ref{theo:main} is not good enough.

\clearpage
\bibliography{paper}
\bibliographystyle{unsrt}
\clearpage
\appendix
\section*{Appendix}
\section{Proofs} \label{app:proof}

Before showing the proofs, we state some lemmas used in the proofs.

\begin{lemma} (Theorem 2 in \cite{ndiaye2015gap})
\label{lemma:dual_gap_bound}
Let $\hat{\bm w}$ be an any solution of the primal problem $P$, and $\hat{\bm \alpha} $ be an any feasible solution of the dual problem $D$.
If $D$ is $\gamma$-strongly concave then the optimal solution of dual problem $\bm \alpha^*$
is within in the sphere
$\Theta_{\bm \alpha^*} := \{\bm \alpha ~|~ \| \bm \alpha - \hat{\bm \alpha} \|_2 \le \sqrt{2 \gamma^{-1} (P(\hat{\bm w}) - D(\hat{\bm \alpha}) )} \} $.
\end{lemma}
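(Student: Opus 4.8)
The plan is to establish the claimed radius directly from the $\gamma$-strong concavity of $D$ together with weak duality, exactly in the spirit of gap-based safe screening. First I would write down the defining inequality of $\gamma$-strong concavity of $D$, anchored at the maximizer $\bm\alpha^*$ and evaluated at the feasible point $\hat{\bm\alpha}$: for any supergradient $\bm g\in\partial D(\bm\alpha^*)$,
\begin{align*}
D(\hat{\bm\alpha}) \le D(\bm\alpha^*) + \bm g^\top(\hat{\bm\alpha}-\bm\alpha^*) - \frac{\gamma}{2}\|\hat{\bm\alpha}-\bm\alpha^*\|_2^2 .
\end{align*}

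Next I would eliminate the first-order term. Since $\bm\alpha^*$ maximizes $D$ over its feasible domain and $\hat{\bm\alpha}$ is feasible, the constrained first-order optimality condition (the variational inequality $\bm g^\top(\hat{\bm\alpha}-\bm\alpha^*)\le 0$) holds, so the inner-product term is non-positive and can be dropped. Rearranging then yields the key estimate
\begin{align*}
\frac{\gamma}{2}\|\hat{\bm\alpha}-\bm\alpha^*\|_2^2 \le D(\bm\alpha^*) - D(\hat{\bm\alpha}) .
\end{align*}

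Finally I would bound the right-hand side by the duality gap. By weak duality $D(\bm\alpha^*)\le P(\bm w^*)\le P(\hat{\bm w})$, where the last inequality holds because $\bm w^*$ minimizes $P$; hence $D(\bm\alpha^*)-D(\hat{\bm\alpha})\le P(\hat{\bm w})-D(\hat{\bm\alpha})$. Substituting and solving for the norm gives $\|\hat{\bm\alpha}-\bm\alpha^*\|_2\le\sqrt{2\gamma^{-1}(P(\hat{\bm w})-D(\hat{\bm\alpha}))}$, i.e.\ $\bm\alpha^*\in\Theta_{\bm\alpha^*}$.

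The step I expect to require the most care is the elimination of the supergradient term: because the dual feasible set is generally constrained (the effective domain of $D$ is not all of $\RR^n$), I cannot simply assert $\bm g=\bm 0$, but must invoke the constrained first-order optimality / variational inequality to guarantee $\bm g^\top(\hat{\bm\alpha}-\bm\alpha^*)\le 0$; handling non-differentiable points of $D$ through a supergradient rather than a gradient likewise needs attention. The remaining ingredients (strong concavity, weak duality, and the final algebra) are routine.
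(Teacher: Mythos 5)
Your proof is correct, and it is essentially the argument behind the paper's source for this lemma: the paper itself offers no proof (it simply cites Theorem 2 of \cite{ndiaye2015gap}), and your chain --- $\gamma$-strong concavity of $D$ anchored at the maximizer $\bm\alpha^*$, elimination of the first-order term via the constrained optimality condition $\bm g^\top(\hat{\bm\alpha}-\bm\alpha^*)\le 0$, then weak duality $D(\bm\alpha^*)\le P(\bm w^*)\le P(\hat{\bm w})$ to pass to the duality gap --- is exactly the standard derivation of that cited result. The only wording refinement needed is at the step you already flagged: the optimality condition furnishes \emph{some} supergradient $\bm g\in\partial D(\bm\alpha^*)$ satisfying the variational inequality, and you then apply the strong-concavity inequality (which holds for \emph{every} supergradient) to that particular $\bm g$.
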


\begin{lemma} (Corollary 4.2 in \cite{Zimmert2015})
\label{lemma:primal_gap_bound}
Let $\hat{\bm w}$ be an any solution of the primal problem $P$, and $\hat{\bm \alpha} $ be an any feasible solution of the dual problem $D$.
If $P$ is $\lambda$-strongly convex then the optimal solution of primal problem ${\bm w}^*$
is within in the sphere $\Theta_{\bm w^*} := \{ \bm w ~|~ \| \bm w - \hat{\bm w} \|_2 \le \sqrt{2 \lambda^{-1} (P(\hat{\bm w}) - D(\hat{\bm \alpha}) ) } \} $.
\end{lemma}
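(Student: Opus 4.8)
The plan is to read the bound directly off the definition of $\lambda$-strong convexity, using the primal minimizer $\bm w^*$ as the reference point and invoking weak duality to lower-bound the (unknown) optimal primal value by the computable dual value $D(\hat{\bm \alpha})$. The only ingredients needed are \textbf{Property B} applied to $P$, the first-order optimality condition at $\bm w^*$, and the nonnegativity of the duality gap. Note that $\lambda$-strong convexity of the whole objective $P$ follows from \textbf{Property B}: adding the convex term $\frac1n\sum_{i}\phi(\bm z_{i\cdot}^\top \bm w)$ to the $\lambda$-strongly convex penalty $\psi$ preserves $\lambda$-strong convexity.

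First I would apply the defining inequality of $\lambda$-strong convexity to $P$ with $\bm a = \hat{\bm w}$ and $\bm b = \bm w^*$, obtaining
\begin{align*}
P(\hat{\bm w}) \ge P(\bm w^*) + \partial P(\bm w^*)^\top (\hat{\bm w} - \bm w^*) + \frac{\lambda}{2}\|\hat{\bm w} - \bm w^*\|_2^2.
\end{align*}
Since $\bm w^*$ minimizes $P$, the optimality condition gives $\bm 0 \in \partial P(\bm w^*)$; selecting this particular subgradient annihilates the linear term and leaves
\begin{align*}
P(\hat{\bm w}) - P(\bm w^*) \ge \frac{\lambda}{2}\|\hat{\bm w} - \bm w^*\|_2^2.
\end{align*}

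Next I would replace $P(\bm w^*)$ by the dual value. By the Fenchel duality theorem underlying \eqref{eq:dual} there is no duality gap, so $P(\bm w^*) = D(\bm \alpha^*)$; and since $\hat{\bm \alpha}$ is dual-feasible, weak duality yields $D(\hat{\bm \alpha}) \le D(\bm \alpha^*) = P(\bm w^*)$. Substituting $P(\bm w^*) \ge D(\hat{\bm \alpha})$ into the previous display gives
\begin{align*}
P(\hat{\bm w}) - D(\hat{\bm \alpha}) \ge P(\hat{\bm w}) - P(\bm w^*) \ge \frac{\lambda}{2}\|\hat{\bm w} - \bm w^*\|_2^2,
\end{align*}
and rearranging produces $\|\hat{\bm w} - \bm w^*\|_2 \le \sqrt{2\lambda^{-1}(P(\hat{\bm w}) - D(\hat{\bm \alpha}))}$, i.e.\ $\bm w^* \in \Theta_{\bm w^*}$, which is the assertion.

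The main obstacle is a matter of care rather than genuine difficulty. One must justify that the subgradient term really vanishes, which relies on $\bm w^*$ being a \emph{global} minimizer so that $\bm 0$ is a legitimate element of $\partial P(\bm w^*)$; and one must confirm strong duality $P(\bm w^*) = D(\bm \alpha^*)$, so that the nonnegative quantity $P(\hat{\bm w}) - D(\hat{\bm \alpha})$ genuinely dominates $P(\hat{\bm w}) - P(\bm w^*)$. Both facts are supplied by the convexity assumptions and the Fenchel duality setup already invoked in the paper. The same nonnegativity of the duality gap also guarantees that the square root defining the radius is real, so the sphere $\Theta_{\bm w^*}$ is well-posed.
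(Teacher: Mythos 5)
Your proof is correct and is essentially the standard argument behind the cited result --- note that the paper itself offers no proof of this lemma, importing it verbatim as Corollary 4.2 of \cite{Zimmert2015} --- namely, $\lambda$-strong convexity of $P$ evaluated at the minimizer $\bm w^*$ (where the subgradient term vanishes since $\bm 0 \in \partial P(\bm w^*)$), combined with the nonnegativity of the duality gap. One minor simplification: you do not actually need strong duality $P(\bm w^*) = D(\bm \alpha^*)$, since weak duality already gives $D(\hat{\bm \alpha}) \le P(\bm w^*)$ directly for any dual-feasible $\hat{\bm \alpha}$, so the bound holds even in settings where a zero Fenchel duality gap has not been verified.
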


\begin{lemma}
\label{lemma:score_bound_by_shpere}
Let $\bm \eta, \bm c \in \RR^d $ be arbitrary vectors and a scalar $p > 0$.
Then, the following optimization problems have closed form solutions as follows:
\begin{align*}
  \bm \eta^\top \bm c - \sqrt{p} \| \bm \eta \|_2  = \min_{\bm q \in \RR^d} \bm \eta^\top \bm q
  {\rm ~~s.t.~~ } \| \bm q - \bm c \|_2 \le \sqrt{p}, \\
  \bm \eta^\top \bm c + \sqrt{p} \| \bm \eta \|_2  = \max_{\bm q \in \RR^d} \bm \eta^\top \bm q
  {\rm ~~s.t.~~ } \| \bm q - \bm c \|_2 \le \sqrt{p}
\end{align*}
\end{lemma}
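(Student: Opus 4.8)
The plan is to reduce both problems to the elementary fact that a linear functional attains its extrema over a Euclidean ball at the boundary point parallel (or anti-parallel) to the functional's coefficient vector. First I would translate the feasible region to the origin by the change of variables $\bm u := \bm q - \bm c$, so that the constraint $\|\bm q - \bm c\|_2 \le \sqrt{p}$ becomes $\|\bm u\|_2 \le \sqrt{p}$ and the objective splits as $\bm \eta^\top \bm q = \bm \eta^\top \bm c + \bm \eta^\top \bm u$. Since $\bm \eta^\top \bm c$ is a constant offset, both problems then reduce to extremizing the linear term $\bm \eta^\top \bm u$ over the ball of radius $\sqrt{p}$ centered at the origin.

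Next I would invoke the Cauchy--Schwarz inequality, which gives $-\|\bm \eta\|_2 \|\bm u\|_2 \le \bm \eta^\top \bm u \le \|\bm \eta\|_2 \|\bm u\|_2$, and combine it with $\|\bm u\|_2 \le \sqrt{p}$ to obtain $-\sqrt{p}\,\|\bm \eta\|_2 \le \bm \eta^\top \bm u \le \sqrt{p}\,\|\bm \eta\|_2$. To see that these two-sided bounds are tight, I would exhibit the explicit extremizers: taking $\bm u = -\sqrt{p}\,\bm \eta / \|\bm \eta\|_2$ attains the lower bound and $\bm u = +\sqrt{p}\,\bm \eta / \|\bm \eta\|_2$ attains the upper bound, both of which are feasible because their norm equals $\sqrt{p}$. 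Undoing the substitution then yields the claimed closed forms $\bm \eta^\top \bm c \mp \sqrt{p}\,\|\bm \eta\|_2$.

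The only case requiring separate handling is the degenerate one $\bm \eta = \bm 0$, where the explicit minimizer and maximizer above are undefined; here the objective is identically $0$ over the whole feasible set, and both claimed formulas collapse to $\bm \eta^\top \bm c = 0$, so the identities hold trivially. I do not anticipate any genuine obstacle: the argument is a textbook dualization of a ball constraint, and essentially all of its content is the Cauchy--Schwarz equality condition together with this single edge case. If a self-contained optimization-theoretic justification were preferred, the same conclusion follows from stationarity of the Lagrangian $\bm \eta^\top \bm u + \mu(\|\bm u\|_2^2 - p)$, but the Cauchy--Schwarz route is shorter and sidesteps any discussion of constraint qualification.
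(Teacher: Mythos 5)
Your proof is correct, but it takes a genuinely different route from the paper's. The paper dispatches this lemma in a single sentence by appealing to ``a Lagrange multiplier method'' --- i.e., forming the Lagrangian $\bm \eta^\top \bm q + \mu(\|\bm q - \bm c\|_2^2 - p)$, solving the stationarity condition, and fixing $\mu$ via the active constraint --- without carrying out any of the computation. You instead translate the ball to the origin and argue via Cauchy--Schwarz plus explicit feasible extremizers $\bm u = \pm\sqrt{p}\,\bm \eta/\|\bm \eta\|_2$. Your route buys three things: it is fully self-contained where the paper's is only a pointer; it needs no first-order optimality conditions, so there is no hidden appeal to convexity or constraint qualification to certify that a stationary point is the global extremum; and it explicitly isolates the degenerate case $\bm \eta = \bm 0$, where the normalized extremizer is undefined and stationarity of the Lagrangian yields no information --- an edge case the paper's one-liner silently glosses over (harmlessly, since the lemma is applied with $\bm \eta$ a data column or row, but still a genuine gap in the sketch). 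What the Lagrangian route buys in exchange is mechanical generality to constraint sets other than Euclidean balls; for this particular lemma, tied as it is to $L_2$ geometry, your argument is the more complete and more elementary of the two.
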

\begin{proof}
Using a Lagrange multiplier method, we can easily prove this Lemma.
\end{proof}

\subsection{Proof of Theorem 1}

\begin{proof}
From the assumptions with regard to the penalty function $\psi$,
\begin{align*}
  \psi^*(\bm w) &= \sup_{\bm v \in \RR^d}  \left\{ \bm w^\top \bm v - \psi(\bm v) \right\}  \\
  &= \sup_{\bm v \in \RR^d} \left\{\sum_{j \in [d] } w_j v_j - \sum_{j \in [d]} \psi_j(v_j) \right\}  \\
  &= \sum_{j \in [d]} \left\{ \sup_{v_j \in \RR} \{ w_j v_j - \psi_j(v_j) \} \right\}.
\end{align*}
Let $\psi^*_j(w_j) $ be a function such that $\sup_{v_j \in \RR} \{ w_j v_j - \psi_j(v_j) \}$,
$\psi^*(\bm w) = \sum_{j \in [d]} \psi^*_j(w_j) $.
Thus, from KKT condition,
\begin{align}
  \label{eq:kkt_wj}
  \tilde{w}^*_j \in \partial \psi^*_j (n^{-1} \tilde{\bm z}_{ \cdot j}^\top \tilde{\bm \alpha}^* ).
\end{align}
The old and the new primal and dual problems are respectively denoted as
$\hat{P}, \tilde{P}, \hat{D}, \tilde{D} $.
Since $\hat{\bm w}^*$ and $ \hat{\bm \alpha}^* $ is optimal solution of $\hat{P}$ and $\hat{D}$, respectively,
\begin{align}
  \label{eq:defi_diff_gap}
  \tilde{P}(\hat{\bm w}^*) - \tilde{D}(\hat{\bm \alpha}^*)
  &= n^{-1} \sum_{i \in [n] } \phi ( \hat{\bm z}_{i \cdot}^\top \hat{\bm w}^*)
  + n^{-1} \sum_{i \in \cM_i } ( \phi ( \tilde{\bm z}_{i \cdot}^\top \hat{\bm w}^* ) - \phi (\hat{\bm z}_{i \cdot}^\top \hat{\bm w}^*) )+ \psi(\hat{\bm w}^*) \nonumber \\
  &+ n^{-1} \sum_{i \in [n]} \phi^*(-\alpha^*_i)
  + \sum_{ j \in [d] } ( \psi^*_j ( n^{-1} \hat{\bm z}_{ \cdot j}^\top \hat{\bm \alpha}^* )  )
  + \sum_{ j \in \cM_j } ( \psi^*_j ( n^{-1} \tilde{\bm z}_{ \cdot j}^\top \hat{\bm \alpha}^* ) - \psi^*_j ( n^{-1} \hat{\bm z}_{ \cdot j}^\top \hat{\bm \alpha}^* ) ) \nonumber \\
  &= n^{-1} \sum_{i \in \cM_i } ( \phi ( \tilde{\bm z}_{i \cdot}^\top \hat{\bm w}^* ) - \phi (\hat{\bm z}_{i \cdot}^\top \hat{\bm w}^*) )
  + \sum_{ j \in \cM_j } ( \psi^*_j ( n^{-1} \tilde{\bm z}_{ \cdot j}^\top \hat{\bm \alpha}^* ) - \psi^*_j ( n^{-1} \hat{\bm z}_{ \cdot j}^\top \hat{\bm \alpha}^* ) ).
\end{align}
For a subset $\cJ \subseteq [d] $,
we define
$\cM(\cdot, \cJ) := \{ i \in [n] ~|~ \exists j \in \cJ {\rm ~s.t.~} (i, j) \in \cM \} $.
Similarly, for a subset $\cI \subseteq [n] $, we define
$\cM(\cI, \cdot) := \{ j \in [n] ~|~ \exists i \in \cI {\rm ~s.t.~} (i, j) \in \cM \} $.
Since $\{\hat{\bm z}_{i \cdot}^\top \hat{\bm w}^*\}_{i \in [n] } $
are stored in the memory and
$\tilde{\bm z}_{i \cdot} ^\top \hat{\bm w}^* = \hat{\bm z}_{i \cdot}^\top \hat{\bm w}^* + \sum_{j \in \cM(i, \cdot)} \hat{w}^*_j (\tilde{z}_{ij} - \hat{z}_{ij} ) $,
$\{\tilde{\bm z}_{i \cdot} ^\top \hat{\bm w}^*\}_{i \in \cM_i} $ is computed with
$\cO(\sum_{i \in \cM_i} | \cM(i, \cdot) | ) = \cO(|\cM|) $ time complexity.
Similarly, $\{\tilde{\bm z}_{\cdot j} ^\top \hat{\bm w}^*\}_{j \in \cM_j} $ is computed with
$\cO(\sum_{j \in \cM_j} | \cM(\cdot, j) | ) = \cO(|\cM|) $ time complexity because
$\tilde{\bm z}_{\cdot j} ^\top \hat{\bm w}^* = \hat{\bm z}_{j \cdot}^\top \hat{\bm \alpha}^* + \sum_{i \in \cM(\cdot, j)} \hat{\alpha}^*_i (\tilde{z}_{ij} - \hat{z}_{ij} )$.
Also, the computational cost of
$\tilde{G}(\hat{\bm w}^*, \hat{\bm \alpha}^*)$ is
$\cO(|\cM|)$.

$(i)$ The case where the loss function has {\bf Property A}:

From {\bf Property A}, $\phi^*$ is $\gamma$-strongly convex and so $\tilde{D}$ is $\gamma/n$-strongly concave
and therefore,
$ \tilde{\bm \alpha}^* \in \Theta_{\tilde{\bm \alpha}^*}
:= \{ \bm \alpha ~|~ \| \bm \alpha - \hat{\bm \alpha}^* \|_2 \le \sqrt{2 \gamma^{-1} (\tilde{P}(\hat{\bm w}^*) - \tilde{D}(\hat{\bm \alpha}^*) )} \} $
by using Lemma \ref{lemma:dual_gap_bound}.
Since $\Theta_{\tilde{\bm \alpha}^*}$ is a sphere,
the lower and upper bounds of the inner product of arbitrary vector
$\bm \eta \in \RR^n$ and
$\tilde{\bm \alpha}^*$ are given in closed form as follows by using Lemma \ref{lemma:score_bound_by_shpere},
\begin{align}
  \label{eq:Ld_score}
  \bm \eta^\top \tilde{\bm \alpha}^* \ge \bm \eta^\top \hat{\bm \alpha}^* - \|\bm \eta\|_2 \sqrt{2n\gamma^{-1}(\tilde{P}(\hat{\bm w}^*) - \tilde{D}(\hat{\bm \alpha}^*)) }, \\
  \label{eq:Ud_score}
  \bm \eta^\top \tilde{\bm \alpha}^* \le \bm \eta^\top \hat{\bm \alpha}^* + \|\bm \eta\|_2 \sqrt{2n\gamma^{-1}(\tilde{P}(\hat{\bm w}^*) - \tilde{D}(\hat{\bm \alpha}^*)) }.
\end{align}
From (\ref{eq:kkt_wj}), (\ref{eq:defi_diff_gap}),
(\ref{eq:Ld_score}), the assumptions and
since $\partial \psi^*_j$ is a monotonically increasing function because
$\psi^*_j $ is a convex function,
$\tilde{w}^*_j$ is bounded as by using Lemma \ref{lemma:score_bound_by_shpere}
\begin{align*}
  \tilde{w}^*_j
  &\ge \inf \partial \psi^*_j(n^{-1} \tilde{\bm z}_{ \cdot j}^\top \tilde{\bm \alpha}^* ) \\
  &\ge \inf \partial \psi^*_j
  \left( \max \left\{ \inf_{\bm \alpha \in {\rm dom} \tilde{D}(\bm \alpha) } n^{-1} \tilde{\bm z}_{ \cdot j}^\top \bm \alpha,~~
  \min_{\bm \alpha \in \Theta_{\tilde{\bm \alpha}^*} }
  n^{-1} \tilde{\bm z}_{ \cdot j}^\top \bm \alpha
  \right\}  \right) \\
  &\ge \inf \partial \psi^*_j \left( \max \left\{
  \inf_{\bm \alpha \in {\rm dom} \tilde{D}(\bm \alpha) } n^{-1} \tilde{\bm z}_{ \cdot j}^\top {\bm \alpha},~~
  n^{-1} \tilde{\bm z}_{ \cdot j}^\top \hat{\bm \alpha}^* -
  \|\tilde{\bm x}_{ \cdot j}\|_2 \sqrt{2 \gamma^{-1} \tilde{G}(\hat{\bm w}^*, \hat{\bm \alpha}^*) } \right\}  \right).
\end{align*}
Thus, $ \tilde{w}^*_j \ge L_D (w^*_j ) $.
Similarly, from (\ref{eq:kkt_wj}), (\ref{eq:defi_diff_gap}),
(\ref{eq:Ud_score}), the assumptions and
since $\partial \psi^*_j$ is a monotonically increasing function,
$\tilde{w}^*_j$ is bounded as:
\begin{align*}
  \tilde{w}^*_j
  &\le \sup \partial \psi^*_j(n^{-1} \tilde{\bm z}_{ \cdot j}^\top \tilde{\bm \alpha}^* ) \\
  &\le \sup \partial \psi^*_j
  \left( \min \left\{ \sup_{\bm \alpha \in {\rm dom} \tilde{D}(\bm \alpha) } n^{-1} \tilde{\bm z}_{ \cdot j}^\top \bm \alpha, ~~
  \max_{\bm \alpha \in \Theta_{\tilde{\bm \alpha}^*} }
  n^{-1} \tilde{\bm z}_{ \cdot j}^\top \bm \alpha
  \right\}  \right) \\
  &\le \sup \partial \psi^*_j \left( \min \left\{
  \sup_{\bm \alpha \in {\rm dom} \tilde{D}(\bm \alpha) } n^{-1} \tilde{\bm z}_{ \cdot j}^\top {\bm \alpha}, ~~
  n^{-1} \tilde{\bm z}_{ \cdot j}^\top \hat{\bm \alpha}^* +
  \|\tilde{\bm x}_{ \cdot j}\|_2 \sqrt{2 \gamma^{-1} \tilde{G}(\hat{\bm w}^*, \hat{\bm \alpha}^*) } \right\}  \right).
\end{align*}
Thus, $ \tilde{w}^*_j \le U_D (w^*_j ) $.
Then we bound $\tilde{\alpha}^*_i$. From Lemma \ref{lemma:dual_gap_bound} we have
\begin{align*}
  \hat{\alpha}^*_i - \sqrt{2n\gamma^{-1} \tilde{G}(\hat{\bm w}^*, \hat{\bm \alpha}^*) }
  \le \tilde{\alpha}^*_i \le
  \hat{\alpha}^*_i + \sqrt{2n\gamma^{-1} \tilde{G}(\hat{\bm w}^*, \hat{\bm \alpha}^*) },
\end{align*}
and thus
\begin{align*}
   \tilde{\alpha}^*_i \ge \max \left\{
   \inf_{\alpha \in {\rm dom} \tilde{D}(\alpha) } \alpha_i,~~
   \hat{\alpha}^*_i - \sqrt{2n\gamma^{-1} \tilde{G}(\hat{\bm w}^*, \hat{\bm \alpha}^*)} \right\}, \\
   \tilde{\alpha}^*_i \le \min \left\{
   \sup_{\alpha \in {\rm dom} \tilde{D}(\alpha) } \alpha_i,~~
   \hat{\alpha}^*_i + \sqrt{2n\gamma^{-1} \tilde{G}(\hat{\bm w}^*, \hat{\bm \alpha}^*)} \right\}.
\end{align*}
The all quantities required for evaluating
$L_D(\tilde{w}^*_j), U_D(\tilde{w}^*_j), L_D(\tilde{\alpha}^*_i)$ and $ U_D(\tilde{\alpha}^*_i) $
are stored in the memory without
$\| \tilde{\bm x}_{\cdot j} \|_2$
and
$\tilde{G}(\hat{\bm w}^*, \hat{\bm \alpha}^*)$.
Since $\| \tilde{\bm x}_{\cdot j} \|_2$ is stored in the memory and
$\| \tilde{\bm x}_{\cdot j} \|_2 = \sqrt{ \| \hat{\bm x}_{\cdot j} \|_2^2 - \sum_{i \in \cM(\cdot, j) } (\tilde{x}_{ij}^2 - \hat{x}_{ij}^2  ) }  $,
$\| \tilde{\bm x}_{\cdot j} \|_2$ is computed with
$\cO(| \cM(\cdot, j) | ) $ time complexity and
form the assumptions, the computational cost of
$\tilde{G}(\hat{\bm w}^*, \hat{\bm \alpha}^*)$ is $\cO(|\cM|) $ and so
$L_D(\tilde{w}^*_j), L_P(\tilde{w}^*_j), L_D(\tilde{\alpha}^*_i), L_p(\tilde{\alpha}^*_i) $
are evaluated with $\cO (|\cM|) $ time complexity.

$(ii)$ The case where the penalty function has {\bf Property B}:

From {\bf Property B}, $\tilde{P}$ is $\lambda$-strongly convex and so
$ \tilde{\bm w}^* \in \Theta_{\tilde{\bm w}^*}
:= \{ \bm w ~|~ \| \bm w - \hat{\bm w}^* \|_2 \le \sqrt{2 \lambda^{-1} ( \tilde{P}(\hat{\bm w}) - \tilde{D}(\hat{\bm \alpha}) )} \} $
by using Lemma \ref{lemma:primal_gap_bound},
thereby
\begin{align*}
  \hat{w}^*_j - \sqrt{2 \lambda^{-1} \tilde{G}(\hat{\bm w}^*, \hat{\bm \alpha}^*) }
  \le \tilde{w}^*_j \le
  \hat{w}^*_j + \sqrt{2 \lambda^{-1} \tilde{G}(\hat{\bm w}^*, \hat{\bm \alpha}^*) }.
\end{align*}
Thus,
\begin{align*}
   \tilde{w}^*_j \ge
   \hat{w}^*_j - \sqrt{2\lambda^{-1} \tilde{G}(\hat{\bm w}^*, \hat{\bm \alpha}^*)}, \\
   \tilde{w}^*_j \le
   \hat{\alpha}^*_i + \sqrt{2\lambda^{-1} \tilde{G}(\hat{\bm w}^*, \hat{\bm \alpha}^*)} .
\end{align*}

Similarly, since $\Theta_{\tilde{\bm w}^*}$ is a sphere,
the lower and upper bounds of the inner product of arbitrary vector
$\bm \eta \in \RR^d$ and
$\tilde{\bm w}^*$ are given in closed form as follows by using Lemma \ref{lemma:score_bound_by_shpere},
\begin{align}
  \label{eq:Lp_score}
  \bm \eta^\top \tilde{\bm w}^* \ge \bm \eta^\top \hat{\bm w}^* - \|\bm \eta\|_2 \sqrt{2\lambda^{-1}(\tilde{P}(\hat{\bm w}^*) - \tilde{D}(\hat{\bm \alpha}^*)) }, \\
  \label{eq:Up_score}
  \bm \eta^\top \tilde{\bm w}^* \le \bm \eta^\top \hat{\bm w}^* + \|\bm \eta\|_2 \sqrt{2\lambda^{-1}(\tilde{P}(\hat{\bm w}^*) - \tilde{D}(\hat{\bm \alpha}^*)) }.
\end{align}
From KKT condition $\tilde{\alpha}^*_i \in - \partial \phi (\tilde{\bm z}_{i \cdot} ^\top \tilde{\bm w}^* ) $,
(\ref{eq:defi_diff_gap}),
(\ref{eq:Lp_score}) and
since $\partial \phi$ is a monotonically increasing function,
$\tilde{\alpha}^*_i$ is bounded as:
\begin{align*}
  \tilde{\alpha}^*_i
  &\ge \inf \left\{ - \partial \phi \left(\tilde{\bm z}_{i \cdot} ^\top \tilde{\bm w}^* \right) \right\} \\
  &\ge \inf \left\{ - \partial \phi \left( \max_{\bm w \in \Theta_{\tilde{\bm w}^*} } \tilde{\bm z}_{i \cdot} ^\top \bm w \right) \right\} \\
  &= \inf \left\{ - \partial \phi \left( \tilde{\bm z}_{i \cdot} ^\top \hat{\bm w}^* + \|\tilde{\bm x}_{i \cdot} \|_2  \sqrt{2\lambda^{-1} \tilde{G}(\hat{\bm w}^*, \hat{\bm \alpha}^*)}  \right) \right\}
\end{align*}
From KKT condition $\tilde{\alpha}^*_i \in - \partial \phi (\tilde{\bm z}_{i \cdot} ^\top \tilde{\bm w}^* ) $,
(\ref{eq:defi_diff_gap}),
(\ref{eq:Up_score}) and
since $\partial \phi$ is a monotonically increasing function,
$\tilde{\alpha}^*_i$ is bounded as:
\begin{align*}
  \tilde{\alpha}^*_i
  &\le \sup \left\{ - \partial \phi \left(\tilde{\bm z}_{i \cdot} ^\top \tilde{\bm w}^* \right) \right\} \\
  &\le \sup \left\{ - \partial \phi \left( \min_{\bm w \in \Theta_{\tilde{\bm w}^*} } \tilde{\bm z}_{i \cdot} ^\top \bm w \right) \right\} \\
  &= \sup \left\{ - \partial \phi \left( \tilde{\bm z}_{i \cdot} ^\top \hat{\bm w}^* - \|\tilde{\bm x}_{i \cdot} \|_2  \sqrt{2\lambda^{-1} \tilde{G}(\hat{\bm w}^*, \hat{\bm \alpha}^*)}  \right) \right\} \\
\end{align*}
The all quantities required for evaluating
$L_P(\tilde{w}^*_j), U_P(\tilde{w}^*_j), L_P(\tilde{\alpha}^*_i)$ and $ U_P(\tilde{\alpha}^*_i) $
are stored in the memory except
$\| \tilde{\bm x}_{i \cdot} \|_2$
and
$\tilde{G}(\hat{\bm w}^*, \hat{\bm \alpha}^*)$.
Since $\| \hat{\bm x}_{i \cdot} \|_2$ is stored in the memory and
$\| \tilde{\bm x}_{i \cdot } \|_2 = \sqrt{ \| \tilde{\bm x}_{i \cdot} \|_2^2 - \sum_{j \in \cM(i, \cdot) } (\tilde{x}_{ij}^2 - \hat{x}_{ij}^2  ) }  $,
$\| \tilde{\bm x}_{\cdot j} \|_2$ is computed with
$\cO(| \cM(\cdot, j) | ) $ and
from the assumptions, the computational cost of
$\tilde{G}(\hat{\bm w}^*, \hat{\bm \alpha}^*)$ is $\cO(|\cM|) $ and so
$L_D(\tilde{w}^*_j), L_P(\tilde{w}^*_j), L_D(\tilde{\alpha}^*_i), L_p(\tilde{\alpha}^*_i) $
are evaluated with $\cO (|\cM|) $ time complexity.

$(iii)$ The case where the loss function has {\bf Property A} and the penalty function has {\bf Property B}:

By using {\bf Property A} and {\bf Property B},
$L_D(\tilde{w}^*_j)$, $ U_D(\tilde{w}^*_j)$, $ L_D(\tilde{\alpha}^*_i)$, $ U_D(\tilde{\alpha}^*_i)$,
$L_P(\tilde{w}^*_j)$, $ U_P(\tilde{w}^*_j)$, $ L_P(\tilde{\alpha}^*_i)$ and $ U_P(\tilde{\alpha}^*_i) $
can be evaluated with $\cO (|\cM|) $ time complexity.
Thus,
$L_{PD}(\tilde{w}^*_j), U_{PD}(\tilde{w}^*_j), L_{PD}(\tilde{\alpha}^*_i), U_{PD}(\tilde{\alpha}^*_i) $
can be evaluated with $\cO (|\cM|) $ time complexity.
\end{proof}

\subsection{Proof of Theorem 2}

\begin{proof}
Consider partially optimizing the primal problem only w.r.t. a subset of primal solutions $ \{ \hat{\bm w}_j \}_{ j \in \cJ \subseteq [d] } $.
Since
\begin{align*}
\check{\bm w}^*_{\cJ} =
\argmin_{ \{w_j \}_{j \in \cJ } }
\frac{1}{n} \sum_{i \in [n]} \phi \left( \sum_{j \in \cJ } \tilde{z}_{ij} w_j  + \sum_{j \in [d] \setminus \cJ } \tilde{z}_{ij} w_j \right)
+ \sum_{j \in \cJ} \psi_j(w_j),
\end{align*}
unless $ \check{w}^*_j = \hat{w}^*_j $ for all $j \in \cJ $, because of the strong convexity of $\tilde{P}$, $\tilde{P}(\hat{\bm w}^*) > \tilde{P}(\check{\bm w}^* ) $ holds and therefore $\tilde{G} (\hat{\bm w}^*, \hat{\bm \alpha}^* ) > \tilde{G} (\check{\bm w}^*, \hat{\bm \alpha}^* ) $.
Thus, both the radius of $\Theta_{\tilde{\bm w}^*} $ and $\Theta_{\tilde{\bm \alpha}^* } $ (the same definition as in Theorem 1) 
based on the primal-dual feasible solution pair $ (\check{\bm w}^*, \hat{\bm \alpha}^*) $
is smaller than the radius of $\Theta_{\tilde{\bm w}^*} $ and $\Theta_{\tilde{\bm \alpha}^* } $
based on the primal-dual feasible solution pair $ (\hat{\bm w}^*, \hat{\bm \alpha}^*) $.
Similarly, consider partially optimizing the dual problem
only w.r.t. a subset of dual solutions $ \{ \hat{\alpha}_i \}_{ i \in \cI \subseteq [n] } $.
Since
\begin{align*}
\check{\bm \alpha}^*_{\cI} =
\argmax_{ \{ \alpha_i \}_{i \in \cI } } - \frac{1}{n} \sum_{i \in \cI} \phi^* ( - \alpha_i)
 - \sum_{j \in [d]} \psi^*_j \left( \frac{1}{n} \sum_{i \in \cI} \tilde{z}_{ij} \alpha_i + \frac{1}{n} \sum_{i \in [n] \setminus \cI } \tilde{z}_{ij} \alpha_i \right),
\end{align*}
unless $ \check{\alpha}^*_i = \hat{\alpha}^*_i $ for all $i\in \cI $, because of the strong concavity of $\tilde{D}$, $\tilde{D}(\hat{\bm \alpha}^*) < \tilde{D}(\check{\bm \alpha}^* ) $ holds and therefore
$\tilde{G} (\hat{\bm w}^*, \hat{\bm \alpha}^* ) > \tilde{G} (\hat{\bm w}^*, \check{\bm \alpha}^* ) $.
Thus, both the radius of $\Theta_{\tilde{\bm w}^*} $ and $\Theta_{\tilde{\bm \alpha}^* } $ based on the primal-dual feasible solution pair $ (\hat{\bm w}^*, \check{\bm \alpha}^*) $
is smaller than the radius of $\Theta_{\tilde{\bm w}^*} $ and $\Theta_{\tilde{\bm \alpha}^* } $ based on the primal-dual feasible solution pair $ (\hat{\bm w}^*, \hat{\bm \alpha}^*) $.
\end{proof}

\section{Bounds in Theorem~\ref{theo:main} for smoothed-hinge SVMs}
\label{app:smoothe-hinge-SVM}
In the case of smoothed-hinge SVM, the bounds in
Theorem~\ref{theo:main}
are written as
 \begin{align*}
  L_D(\tilde{{w}}_j^*) =
  \min\left\{
  n^{-1} \sum{}_{i \mid \tilde{z}_{ij} < 0} \tilde{z}_{ij}, 
  n^{-1} \lambda^{-1} \tilde{\bm z}_{\cdot j}^\top \hat{\bm \alpha}^* - \lambda^{-1} \|\tilde{\bm x}_j\|_2 \sqrt{2 \gamma^{-1} \tilde{G}(\hat{\bm {w}}^*, \hat{\bm \alpha}^*)}
  \right\}
  \\
  U_D(\tilde{{w}}_j^*) = 
  \max\left\{
  n^{-1} \sum{}_{i \mid \tilde{z}_{ij} > 0} \tilde{z}_{ij}, 
  n^{-1} \lambda^{-1} \tilde{\bm z}_{\cdot j}^\top \hat{\bm \alpha}^* + \lambda^{-1} \|\tilde{\bm x}_j\|_2 \sqrt{2 \gamma^{-1} \tilde{G}(\hat{\bm {w}}^*, \hat{\bm \alpha}^*)}
  \right\}
   \end{align*}
   \begin{align*}
  L_D(\tilde{\alpha}_i^*) =
  \min\left\{
  0, \hat{\alpha}_i - \sqrt{2n \gamma^{-1} \tilde{G}(\hat{\bm {w}}^*, \hat{\bm \alpha}^*)}
  \right\},
  ~~~
  U_D(\tilde{\alpha}_i^*) = 
  \max\left\{
  1, \hat{\alpha}_i + \sqrt{2n \gamma^{-1} \tilde{G}(\hat{\bm {w}}^*, \hat{\bm \alpha}^*)}
  \right\},
  \end{align*}
    \begin{align*}
     \\
     L_P(\tilde{{w}}_j^*) = \hat{{w}}_j - \sqrt{2 \lambda^{-1} \tilde{G}(\hat{\bm {w}}^*, \hat{\bm \alpha}^*)}
  ~~~
  U_P(\tilde{{w}}_j^*) = \hat{{w}}_j + \sqrt{2 \lambda^{-1} \tilde{G}(\hat{\bm {w}}^*, \hat{\bm \alpha}^*)},
\end{align*}
\begin{align*}
  L_P(\tilde{\alpha}_i^*) =
  \begin{cases}
   0, & (\tilde{\bm z}_{i \cdot}^\top \hat{\bm {w}}^* + \|\tilde{\bm x}_{i \cdot}\|_2 \sqrt{2 \lambda^{-1} \tilde{G}(\hat{\bm {w}}^*, \hat{\bm \alpha}^*)} > 1), \\
   1, & (\tilde{\bm z}_{i \cdot}^\top \hat{\bm {w}}^* + \|\tilde{\bm x}_{i \cdot}\|_2 \sqrt{2 \lambda^{-1} \tilde{G}(\hat{\bm {w}}^*, \hat{\bm \alpha}^*)} < 1 - \gamma), \\
   - \gamma^{-1}(1 - \tilde{\bm z}_{i \cdot}^\top \hat{\bm {w}}^* - \|\tilde{\bm x}_{i \cdot}\|_2 \sqrt{2 \lambda^{-1} \tilde{G}(\hat{\bm {w}}^*, \hat{\bm \alpha}^*)}), & \text{(otherwise)},
  \end{cases}
\end{align*}
\begin{align*}
  U_P(\tilde{\alpha}_i^*) = 
  \begin{cases}
   0, & (\tilde{\bm z}_{i \cdot}^\top \hat{\bm {w}}^* - \|\tilde{\bm x}_{i \cdot}\|_2 \sqrt{2 \lambda^{-1} \tilde{G}(\hat{\bm {w}}^*, \hat{\bm \alpha}^*)} > 1), \\
   1, & (\tilde{\bm z}_{i \cdot}^\top \hat{\bm {w}}^* - \|\tilde{\bm x}_{i \cdot}\|_2 \sqrt{2 \lambda^{-1} \tilde{G}(\hat{\bm {w}}^*, \hat{\bm \alpha}^*)} < 1 - \gamma), \\
   - \gamma^{-1}(1 - \tilde{\bm z}_{i \cdot}^\top \hat{\bm {w}}^* + \|\tilde{\bm z}_{i \cdot}\|_2 \sqrt{2 \lambda^{-1} \tilde{G}(\hat{\bm {w}}^*, \hat{\bm \alpha}^*)}), & \text{(otherwise)}.
  \end{cases}
    \end{align*}


\end{document}